\documentclass{article}
\usepackage{iclr2021_conference,times}
\usepackage{hyperref}
\usepackage{url}

\usepackage[pdftex]{graphicx}
\usepackage{amsmath,amsfonts,bm}
\usepackage{pifont,bbm,nicefrac}
\usepackage{subcaption,booktabs}
\usepackage{amsthm}
\newcommand{\ie}{\textit{i.e.}~}
\newcommand{\eg}{\textit{e.g.}~}
\newtheorem{lemma}{Lemma}
\newtheorem{theorem}{Theorem}
\newtheorem{assumption}{Assumption}
\newtheorem{proposition}{Proposition}
\newcommand{\circledOne}{\text{\ding{172}}\;}
\newcommand{\circledTwo}{\text{\ding{173}}\;}
\newcommand{\circledThree}{\text{\ding{174}}\;}

\title{Expected Tight Bounds for Robust Training}

\author{
Salman Alsubaihi\textsuperscript{*\S}
\& Adel Bibi\textsuperscript{*\ddag\S}
\& Modar Alfadly\textsuperscript{*\S}
\& Abdullah Hamdi\textsuperscript{*}
\& Bernard Ghanem\textsuperscript{*}\\
\textsuperscript{*} KAUST \& \textsuperscript{\ddag} University of Oxford\\
\textsuperscript{\S} Equal contribution
}

\iclrfinalcopy 
\begin{document}

\maketitle

\begin{abstract}
    Training deep neural networks that are robust to norm-bounded adversarial attacks remains an elusive problem. While exact and inexact verification-based methods are generally too expensive to train large networks, it was demonstrated that bounded input intervals can be inexpensively propagated from a layer to another through deep networks. This interval bound propagation approach (IBP) not only has improved both robustness and certified accuracy but was the first to be employed on large/deep networks. However, due to the very loose nature of the IBP bounds, the required training procedure is complex and involved. In this paper, we closely examine the bounds of a block of layers composed in the form of Affine-ReLU-Affine. To this end, we propose \emph{expected} tight bounds (true bounds in expectation), referred to as ETB, which are provably tighter than IBP bounds in expectation. We then extend this result to deeper networks through blockwise propagation and show that we can achieve orders of magnitudes tighter bounds compared to IBP. Furthermore, using a simple standard training procedure, we can achieve impressive robustness-accuracy trade-off on both MNIST and CIFAR10.
\end{abstract}

\section{Introduction}
Deep neural networks (DNNs) are susceptible to small imperceptible perturbations, best known as \emph{adversarial attacks} which can lead to drastic performance degradation. Several network defense approaches were proposed to alleviate this problem. They can be coarsely categorized into empirical methods, like adversarial training \citep{madry2017towards}, and provably verifiable methods \citep{katz2017reluplex}. Such verifiers are generally very expensive to compute exactly due to their combinatoric nature. However, they can be sped up with relaxed verification, often referred to as certification methods, by over approximating the worst adversarial loss, over all bounded energy (commonly measured in $\ell_\infty$) perturbations around a given input \citep{kolter2017provable}. It has been demonstrated \citep{gowal2018effectiveness} that robustly training large networks using a certificate is possible by leveraging the cheap-to-compute but very loose interval-based certifier, known as interval domain from \citet{mirman2018differentiable}. In particular, they propagate the $\epsilon$-$\ell_\infty$ norm bounded input centered at $\mathbf{x} \in \mathbb{R}^n$ (\ie $[\mathbf{x}-\epsilon\mathbf{1}_n,\mathbf{x} + \epsilon\mathbf{1}_n]$) through the layers in the network. Despite the simplicity and cost-efficacy of this interval bound propagation (IBP), it results in very loose output interval bounds; which in turn necessitates a complex and carefully tuned training procedure. We aim in this work to improve the bounds computed by IBP to allow for easier training, which can result in stronger robustness, by investigating the bounds in a probabilistic expectation setting. Figure \ref{fig:pull_figure} demonstrates the main differences between IBP and our proposed method ETB. We were inspired by prior arts that had investigated the notion of probabilistic certification \citep{webb2018statistical,weng2018evaluating}.

\begin{figure}[t]
    \begin{center}
        \includegraphics[width=0.99\linewidth]{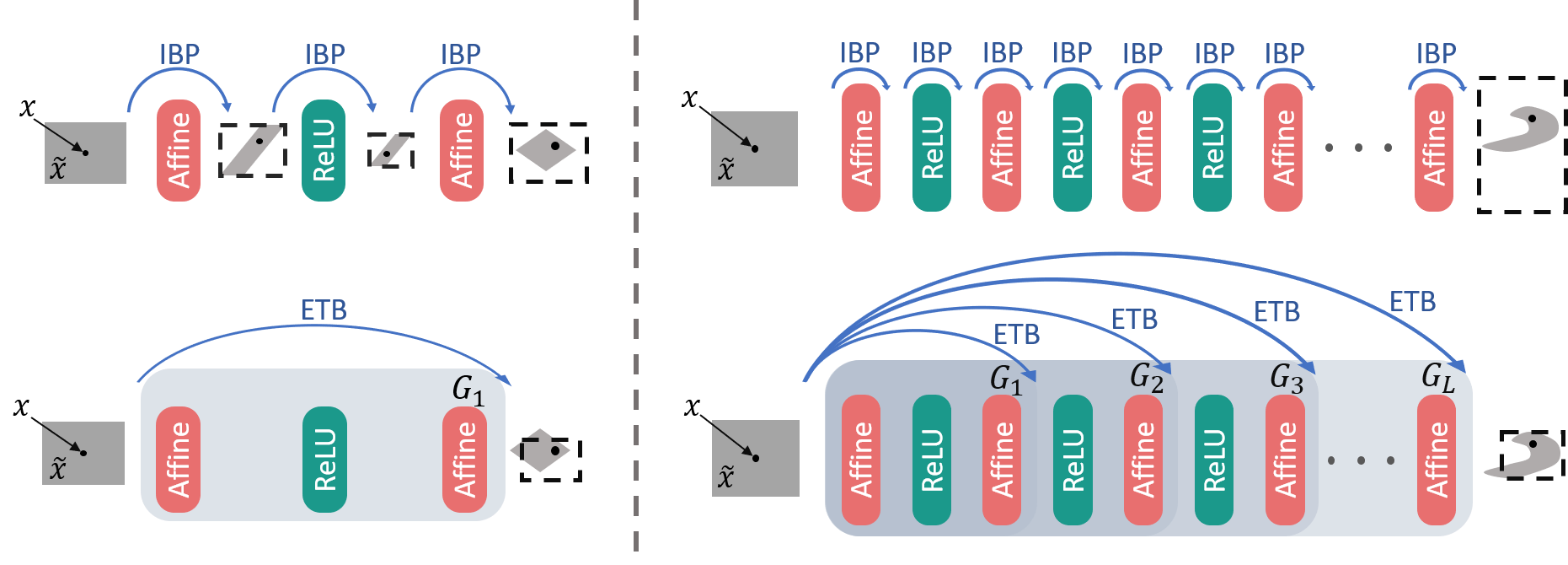}
    \end{center}
    \caption{\textbf{ETB vs IBP bound propagation.} ETB propagates input interval bounds through a block of layers by approximating the block with an intermediate linear layer $\mathbf{G}_i$ as opposed to propagating them layerwise in IBP. On the right, we show how to extend ETB for deep networks by the recursive blockwise propagation. Observe that the interval bounds computed by ETB are not always supersets to the output polytope but they can be much tighter than IBP.}
    \label{fig:pull_figure}
\end{figure}

We study the output bounds of the block in the form of Affine-ReLU-Affine in expectation over a distribution of network parameters. We prove that our bounds (ETB) are, in expectation, supersets to the true bounds of this block and much tighter than IBP bounds \citep{gowal2018effectiveness}. We conduct experiments on synthetic and on real networks, to verify the theory, as well as the factors of improvement over IBP. Additionally, we show a practically-efficient approach to propagating our blockwise bounds through a composition of blocks constituting a deep network; thus, resulting in magnitudes tighter output bounds compared to IBP. Due to the tightness of our proposed expected bounds, we show that with a simple standard training procedure, large deep networks can be robustly trained on both MNIST \citep{lecun1998mnist} and CIFAR10 \citep{krizhevsky2009learning} achieving state-of-art robustness-accuracy trade-off compared to IBP. In other words, we can consistently improve robustness with minimal effect on test accuracy as compared to IBP.

\section{Methodology}

\subsection{Interval Bound Propagation (IBP)}
Given input interval bounds for a network (\ie $[\mathbf{L_x, U_x}]$), \citet{gowal2018effectiveness} proposed obtaining the output bounds (\ie $[\mathbf{L_{IBP}}, \mathbf{U_{IBP}}]$) by recursively computing them for every layer.

The output bounds for affine layers in the form $f(\mathbf{x}) = \mathbf{Ax+b}$, where $\mathbf{A} \in \mathbb{R}^{k \times n}$ and $\mathbf{b} \in \mathbb{R}^k$:
\begin{equation}
    \label{eq:bound_through_linear}
    \mathbf{L}_f, \mathbf{U}_f
    = f\left(\frac{\mathbf{U_x} + \mathbf{L_x}}{2}\right) \mp |\mathbf{A}|\left(\frac{\mathbf{U_x} - \mathbf{L_x}}{2}\right)
\end{equation}
Where $|.|$ is an elementwise absolute value operator. For the sake of brevity, the operator $\mp$ is a subtraction for $\mathbf{L}_f$ and an addition for $\mathbf{U}_f$. When the input is $\epsilon$-$\ell_\infty$ norm bounded (\ie $\mathbf{L_x, U_x} = \mathbf{x} \mp \epsilon\mathbf{1}_n$ where $\mathbf{1}_n$ is the $\mathbb{R}^n$ all-ones vector), Equation (\ref{eq:bound_through_linear}) simply becomes $\mathbf{L}_f, \mathbf{U}_f = f(\mathbf{x}) \mp \epsilon|\mathbf{A}|\mathbf{1}_n$.

For any non-decreasing elementwise nonlinearity $h(\mathbf{x})$, like the ReLU \emph{activation function} (\ie $h(\mathbf{x}) = \max(\mathbf{x}, \mathbf{0}_n)$), the bounds are propagated as $\mathbf{L}_h, \mathbf{U}_h = h(\mathbf{L_x}), h(\mathbf{U_x})$.

\subsection{Expected Tight Bound Propagation (ETB)} \label{sec:etb}
The high level idea is to use a recursive blockwise bound propagation which can be tighter than applying IBP layerwise for deeper networks. We consider the Affine-ReLU-Affine block of the functional form $g(\mathbf{x}) = \mathbf{a}_2^\top \max\left(\mathbf{A}_1\mathbf{x} + \mathbf{b}_1, \mathbf{0}_n\right) + b_2$. Throughout the paper, we take $\mathbf{A}_1 \in \mathbb{R}^{k \times n}$ and without loss of generality the second affine map is a single vector $\mathbf{a}_2 \in \mathbb{R}^{k}$. We propose new closed form expressions for the interval bounds denoted as $[\mathbf{L}_{\mathbf{ETB}},\mathbf{U}_{\mathbf{ETB}}]$. In expectation under a distribution of the network parameters $\theta = \{\mathbf{A}_1, \mathbf{a}_2\}$, we prove them to be supersets to the true (tightest) bounds $[\mathbf{L}_{g},\mathbf{U}_{g}]$ for a sufficiently large input dimension $n$ and tighter than $[\mathbf{L}_{\mathbf{IBP}},\mathbf{U}_{\mathbf{IBP}}]$ as the number of hidden nodes $k$ increases. To derive our ETB bounds, we study the bounds of:
\begin{equation}
    \tilde{g}(\tilde{\mathbf{x}}) = \mathbf{a}_2^\top\mathbf{M}\left(\mathbf{A}_1\tilde{\mathbf{x}} + \mathbf{b}_1\right) + b_2.
\end{equation}
Note that $\tilde{g}$ is very similar to $g$ but with the ReLU replaced by a diagonal matrix $\mathbf{M} = \text{diag}\left(\mathbbm{1}\left\{\mathbf{U}_1 \ge \mathbf{0}_k\right\}\right)$, where $\mathbbm{1}$ is the indicator function and $\mathbf{U}_1$ is the upper bound of the first affine layer (\ie $\mathbf{A}_1\tilde{\mathbf{x}} + \mathbf{b}_1$). In other words, $\mathbf{M}_{ii}$ is one when the $i^{\text{th}}$ element of $\mathbf{U}_1$ is non-negative and zero otherwise. Observe that for the coordinates with $\mathbf{U}_1 \leq 0$, \ie the output of the first affine layer is always negative, the function $\tilde{g}$ behaves identically to $g$. Note that for a given $\mathbf{M}$, $\tilde{g}$ is an affine function with the following output interval bounds:
\begin{equation}
    \label{eq:etb}
    \mathbf{L}_{\mathbf{ETB}}, \mathbf{U}_{\mathbf{ETB}} = \tilde{g}\left(\frac{\mathbf{U_x} + \mathbf{L_x}}{2}\right) \mp |\mathbf{a}_2^\top \mathbf{M} \mathbf{A}_1|\left(\frac{\mathbf{U_x} - \mathbf{L_x}}{2}\right)
\end{equation}
To compare $\mathbf{L}_{\mathbf{ETB}}$ and $\mathbf{U}_{\mathbf{ETB}}$ to $\mathbf{L}_{g}$ and $\mathbf{U}_{g}$, respectively, and since having access to $\mathbf{L}_{g}$ and $\mathbf{U}_{g}$ is not feasible, we state the following trivial assumption.

\begin{assumption}
    \label{key_assumption}
    Let $\mathbf{A}_1 \sim \mathcal{N}(\mathbf{0}, \sigma_{\mathbf{A}_1}^2\mathbf{I})$, $\mathbf{a}_2 \sim \mathcal{N}(\mathbf{0}_n, \sigma_{\mathbf{a}_2}^2\mathbf{I})$, and $\tilde{\mathbf{x}} \sim \mathcal{U}[\mathbf{x} - \epsilon \mathbf{1}_n,\mathbf{x} + \epsilon\mathbf{1}_n]$ where $\mathbb{E}_{\mathbf{A}_1,\mathbf{a}_2,\tilde{\mathbf{x}}} \left[g(\tilde{\mathbf{x}})\right]$ and $\text{Var}_{\mathbf{A}_1,\mathbf{a}_2,\tilde{\mathbf{x}}} \left[g(\tilde{\mathbf{x}}) \right]$ are finite. Then, for a sufficiently large $m$:
    \begin{align*}
        & \mathbf{L}_{\text{approx}} \leq \mathbb{E}_{\mathbf{A}_1,\mathbf{a}_2}\left[\mathbf{L}_{g}\right]
        \text{~and~}
        \mathbb{E}_{\mathbf{A}_1,\mathbf{a}_2}\left[\mathbf{U}_{g}\right] \leq \mathbf{U}_{\text{approx}},
        \text{~where} \\
        & \mathbf{L}_{\text{approx}}, \mathbf{U}_{\text{approx}} = \mathbb{E}_{\mathbf{A}_1,\mathbf{a}_2,\tilde{\mathbf{x}}} \left[g(\tilde{\mathbf{x}})\right] \mp m \sqrt{\text{Var}_{\mathbf{A}_1,\mathbf{a}_2,\tilde{\mathbf{x}}} \left[g(\tilde{\mathbf{x}}) \right]}
    \end{align*}
\end{assumption}

Under Assumption \ref{key_assumption} and to show that $[\mathbf{L}_{\mathbf{ETB}}, \mathbf{U}_{\mathbf{ETB}}]$ form a superset to $[\mathbf{L}_{g},\mathbf{U}_{g}]$ in expectation, we can show that $\mathbb{E}_{\mathbf{A}_1,\mathbf{a}_2}\left[\mathbf{L}_{\mathbf{ETB}}\right] \leq \mathbf{L}_{\text{approx}}$ and $\mathbf{U}_{\text{approx}} \leq \mathbb{E}_{\mathbf{A}_1,\mathbf{a}_2}\left[\mathbf{U}_{\mathbf{ETB}}\right]$. In general, $\mathbf{L}_{\text{approx}}$ and $\mathbf{U}_{\text{approx}}$ are difficult to compute but they can be well approximated as follows:

\begin{proposition}
    \label{lyapunov_clt}
    For independent $\mathbf{a} \in \mathbb{R}^n \sim \mathcal{N}(\mathbf{0}_n,\sigma_a^2 \mathbf{I})$ and $\tilde{\mathbf{x}} \sim \mathcal{U}[\mathbf{x} - \epsilon \mathbf{1}_n,\mathbf{x} + \epsilon\mathbf{1}_n]$, we have:
    \begin{align*}
        & \frac{\sum_{i=1}^n (\tilde{x}_i a_i - \mathbb{E}[a_i \tilde{x}_i])}
        {\sqrt{\text{Var}\left(\sum_{i=1}^n (\tilde{x}_i a_i - \mathbb{E}[a_i \tilde{x}_i]) \right)}}
        \rightarrow^d \mathcal{N}(0,1)
    \end{align*}
    where $\rightarrow^d$ indicates convergence in distribution (Lyapunov Central Limit Theorem).
\end{proposition}

\begin{proposition}
    \label{output_cov_clt}
    For independent $\mathbf{A}_1 \in \mathbb{R}^{k \times n} \sim \mathcal{N}(\mathbf{0}, \sigma_{\mathbf{A}_1}^2\mathbf{I})$ and $\tilde{\mathbf{x}} \sim \mathcal{U} \left[\mathbf{x} - \epsilon \mathbf{1}_n, \mathbf{x} + \epsilon\mathbf{1}_n\right]$,
    \begin{align*}
        \text{Covariance}(\mathbf{A}_1\tilde{\mathbf{x}}) = \left(\frac{1}{3}\epsilon^2 \sigma_{\mathbf{A}_1}^2 n + \sigma_{\mathbf{A}_1}^2 \text{trace}(\mathbf{x} \mathbf{x}^\top)\right) \mathbf{I}.
    \end{align*}
\end{proposition}

For a sufficiently large $n$, following Propositions \ref{lyapunov_clt} and \ref{output_cov_clt}, we can approximate $g(\tilde{\mathbf{x}}) \approx \mathbf{a}_2^\top \max\left(\tilde{\mathbf{y}},\mathbf{0}_n\right) + b_2$, where $\tilde{\mathbf{y}}$ by Lyapunov Central Limit Theorem is a Gaussian vector $\tilde{\mathbf{y}} \sim \mathcal{N}(\mathbf{b}_1, (\frac{1}{3}\epsilon^2 \sigma_{\mathbf{A}_1}^2 n + \sigma_{\mathbf{A}_1}^2 \text{trace}(\mathbf{x} \mathbf{x}^\top)) \mathbf{I})$. This now can be used to approximate $\mathbf{L}_{\text{approx}}$ and $\mathbf{U}_{\text{approx}}$.

\begin{theorem}
    \label{thm:correctness_thm1}
    (ETB as Supersets in Expectation) Under Assumption \ref{key_assumption} and a large dimension $n$,
    \begin{equation}
        \mathbb{E}_{\theta}\left[\mathbf{L}_{\mathbf{ETB}}\right] \leq \mathbb{E}_{\theta}\left[\mathbf{L}_{\text{g}}\right] \text{~and~} \mathbb{E}_{\theta}\left[\mathbf{U}_{\text{g}}\right] \leq \mathbb{E}_{\theta}\left[\mathbf{U}_{\mathbf{ETB}}\right].
    \end{equation}
\end{theorem}

Theorem \ref{thm:correctness_thm1} states that the interval bounds for function $\tilde{g}$ are simply looser bounds to the function of interest $g$ in expectation under plausible distributions of $\theta = \{\mathbf{A}_1, \mathbf{a}_2\}$.

Now, we investigate the tightness of these bounds as compared to the IBP bounds $[\mathbf{L}_{\mathbf{IBP}}, \mathbf{U}_{\mathbf{IBP}}]$.
\begin{theorem}
    \label{thm:tightness_thm2}
    (ETB vs. IBP in Expectation) Consider an input $\tilde{\mathbf{x}} \sim \mathcal{U}[\mathbf{x} - \epsilon \mathbf{1}_n,\mathbf{x} + \epsilon\mathbf{1}_n]$ to $\mathbf{a}_2^\top \max\left(\mathbf{A}_1\mathbf{x} + \mathbf{b}_1, \mathbf{0}_n\right) + b_2$, where $\mathbf{a}_2\sim\mathcal{N}(\mathbf{0}_n,\sigma_{\mathbf{a}_2}^2\mathbf{I})$, under the assumption that $\forall j$
    \begin{equation*}
        \frac{1}{\sqrt{2\pi}} \mathbf{x}_j \mathbf{1}_k^\top \mathbf{A}_1(:,j) + \frac{1}{2n} \mathbf{1}_k^\top \mathbf{b}_1
        \ge \epsilon\left( \|\mathbf{A}_1(:,j)\|_2 - \frac{1}{\sqrt{2\pi}} \|\mathbf{A}_1(:,j)\|_1\right)
    \end{equation*}
    we have: $\mathbb{E}_{\mathbf{a}_2} \left[(\mathbf{U}_{\textbf{IBP}} - \mathbf{L}_{\textbf{IBP}}) - (\mathbf{U}_\mathbf{ETB} - \mathbf{L}_\mathbf{ETB})\right] \ge 0$.
\end{theorem}

Theorem \ref{thm:tightness_thm2} states that under some assumptions on $\mathbf{A}_1$ and a plausible distribution for $\mathbf{a}_2$, our proposed ETP interval width can be much smaller than the IBP interval width in expectation. Next, we show that the inequality assumption in Theorem \ref{thm:tightness_thm2} is very mild. In fact, a wide range of $(\mathbf{A}_1,\mathbf{b}_1)$ satisfy it, and the following proposition gives an example.

\begin{proposition}
    \label{proposition_random_mat}
    For a matrix $\mathbf{A}_1 \in \mathbb{R}^{k \times n} \sim \mathcal{N}(\mathbf{0}, \mathbf{I})$,
    \begin{equation*}
         \mathbb{E}_{\mathbf{A}_1}\left(\|\mathbf{A}_1(:,j)\|_2 - \frac{1}{\sqrt{2\pi}} \|\mathbf{A}_1(:,j)\|_1\right)
         = \sqrt{2} \frac{\Gamma\left(\frac{k+1}{2}\right)}{\Gamma\left(\frac{k}{2}\right)} - k \sqrt{\frac{2}{\pi}}
         \approx \sqrt{k} - k\sqrt{\frac{2}{\pi}}.
    \end{equation*}
\end{proposition}

Proposition \ref{proposition_random_mat} implies that as the number of hidden nodes $k$ increases, the expectation of the right hand side of the inequality assumption in Theorem \ref{thm:tightness_thm2} grows more negative, while the left hand side is zero in expectation when $\mathbf{b}_1\sim\mathcal{N}(\mathbf{0}_k,\mathbf{I})$. In other words, for zero-mean Gaussian weights $(\mathbf{A}_1,\mathbf{b}_1)$ and with a large enough number of hidden nodes $k$, the assumption is satisfied. Furthermore, it is common to regularize network weights while training with an $\ell_2$ regularizer; encouraging the weights to follow a zero-mean Gaussian distribution. We show empirical evidence of this on MNIST and CIFAR10 in the \textbf{Appendix} along with all the proofs and detailed analyses.

\section{Experiments} \label{sec:training}

\begin{figure*}[t]
    \centering
    \includegraphics[width=\linewidth]{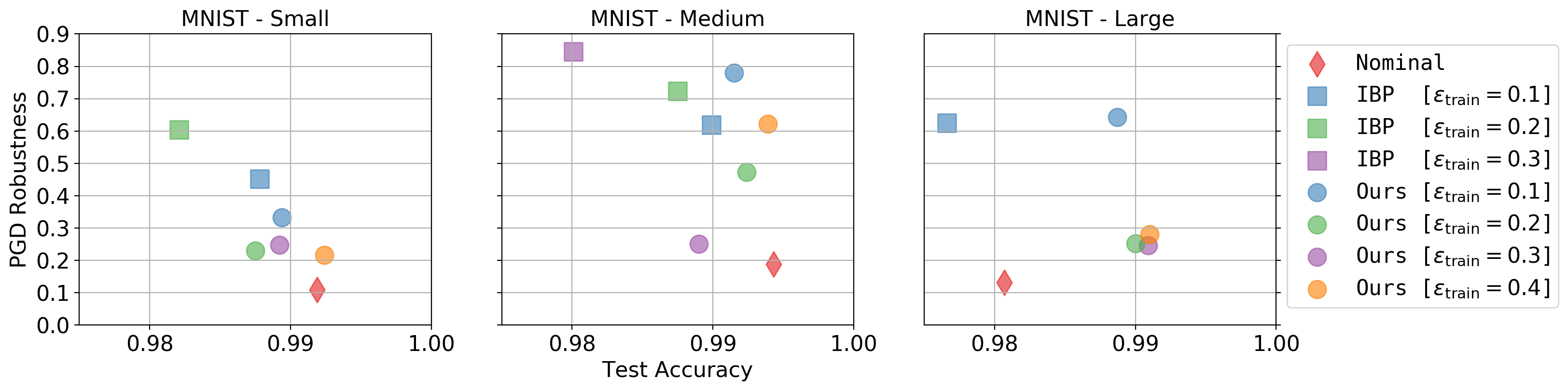}
    \includegraphics[width=\linewidth]{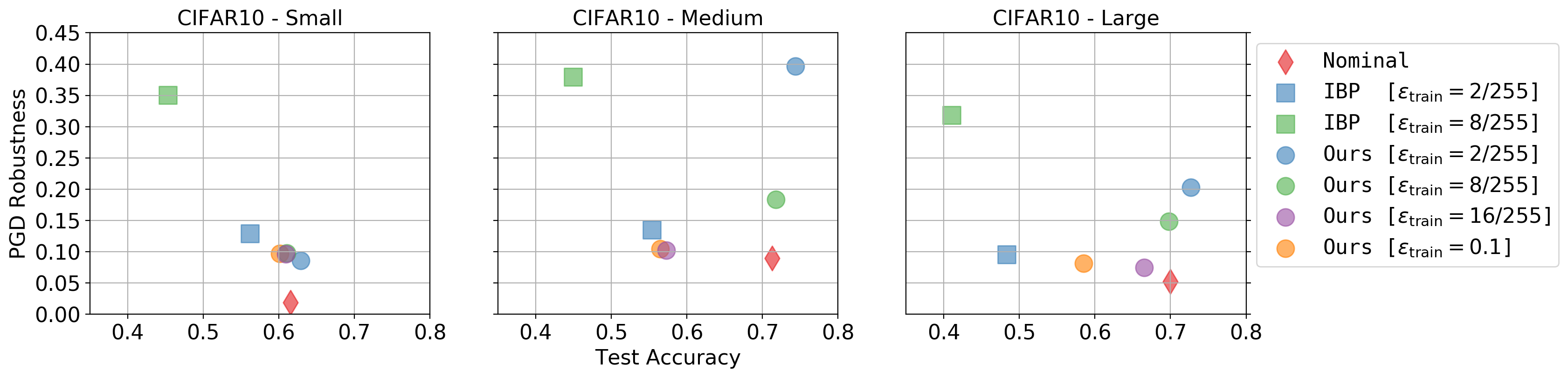}
    \caption{\footnotesize\textbf{Better Test Accuracy and Robustness on MNIST and CIFAR10.} We compare test accuracy and PGD robustness averaged over multiple $\epsilon_\text{test}$ of three models (small, medium, and large) robustly trained using our bounds against IBP. We have trained both methods using four different $\epsilon_\text{train}$. We eliminated all models with test accuracy lower than $97.5$\% for MNIST and $40.0$\% for CIFAR10. This shows an impressive trade-off between accuracy and robustness where in some cases we even excel on medium and large models.}\label{fig:mnist_cifar10}
\end{figure*}

We compare our method against models trained nominally (\ie only the standard training loss is used), and those trained robustly with IBP \citep{gowal2018effectiveness}. Given the well-known robustness-accuracy trade off \citep{tsipras2018robustness}, robust models are often less accurate. Therefore, we compare all methods using a robustness vs. accuracy scatter plot. Following prior work, we use Projected Gradient Descent (PGD) \citep{madry2017towards} to measure robustness. We use a loss function similar to the one proposed in \citet{gowal2018effectiveness}. In particular, we use $L = \ell(f_{\theta}(\mathbf{x}),\mathbf{y}_{g}) + \kappa \ell(\mathbf{z},\mathbf{y}_{g})$, where $\ell$, $f_{\theta}(\mathbf{x})$, $\mathbf{y}_{g}$, and $\kappa$ are the cross-entropy loss, output logits, true class label, and regularization hyperparameter, respectively. $\mathbf{z}$ represents the ``adversarial'' logits that combine the lower bound of the true label and the upper bound of all other labels. Nominal training occurs when $\kappa=0$. Due to the tightness of our bounds, in contrast to IBP, we don't need to carefully tune $\kappa$ or $\epsilon_\text{train}$.

We train the three network models (small, medium, large) provided by \citet{gowal2018effectiveness} on both MNIST and CIFAR10. Following the same setup in \citet{gowal2018effectiveness}, we train all models with $\epsilon_\text{train} \in \{0.1,0.2,0.3,0.4\}$ for MNIST and $\epsilon_\text{train} \in \{2/255, 8/255, 16/255, 0.1\}$ for CIFAR10. In all experiments, and for stronger baselines and fair comparison between IBP training and ETB training, we grid search over $\{0.1,0.001,0.0001\}$ learning rates and employ a temperature over the logits with a grid of $\{1,1/5\}$ as in \citet{hinton2015distilling} and report the best performing models for both. Then, we compute PGD robustness for every $\epsilon_\text{train}$ of every model for all $\epsilon_\text{test} \in \{0.1,0.2,0.3,0.4\}$ for MNIST and for all $\epsilon_\text{test} \in \{2/255, 8/255, 16/255, 0.1\}$ for CIFAR10.

We compute the average PGD robustness over all $\epsilon_\text{test}$ and the test accuracy, and report them in a 2D scatter plot. We report the performance results on MNIST and CIFAR10 for the small, medium, and large architectures in Figure \ref{fig:mnist_cifar10}. For all trained architectures, we only report the results for those that achieve at least a test accuracy of $97.5\%$ and $40\%$ on MNIST and CIFAR10, respectively; otherwise, it is an indication of failure in training. Interestingly, our training scheme can be used to train all architectures for all $\epsilon_\text{train}$. This is unlike IBP, which for example was only able to successfully train the large architecture with $\epsilon_\text{train} = 0.1$ on MNIST. Moreover, models trained with ETB always achieve better PGD robustness on all architectures while preserving similar if not higher accuracy (on large networks). Models trained with IBP achieve high robustness but their test accuracy is drastically affected. More details and experiments are left for the avid reader in the \textbf{Appendix}.

\clearpage
\bibliography{references}
\bibliographystyle{iclr2021_conference}
\clearpage

\appendix
\setcounter{lemma}{0}
\setcounter{theorem}{0}
\setcounter{assumption}{0}
\setcounter{proposition}{0}
\section{PGD Robustness on Specific Input Bounds} \label{sec:pgd_robustness}
In section \ref{sec:training}, we used the three models (small, medium, and large) provided by \citet{gowal2018effectiveness}:

\begin{table}[h]
    \centering
    \begin{tabular}{c|c|c}
        \toprule
        small                           & medium                           & large                             \\
        \midrule
        CONV $16 \times 4 \times 4 + 2$ & CONV $32 \times 3 \times 3 + 1$  & CONV $64 \times 3 \times3 + 1$    \\
        CONV $32 \times 4 \times 4 + 1$ & CONV $32 \times 4 \times 4 + 2$  & CONV $64 \times 3 \times 3 + 1$   \\
        FC 100                          & CONV $64  \times 3 \times 3 + 1$ & CONV $128 \times 3 \times 3  + 2$ \\
                                        & CONV $64 \times 4 \times 4 + 2$  & CONV $128 \times 3 \times 3 + 1$  \\
                                        & FC 512                           & CONV $128 \times 3 \times 3 + 1$  \\
                                        & FC 512                           & FC 200                            \\
        \bottomrule
    \end{tabular}
    \caption{\textbf{Model Architectures.} ``CONV $p \times w \times h + s$'', correspond to $p$ 2D convolutional filters with size $(w \times h)$ and strides of $s$. While ``FC $d$'' is a fully connected layer with $d$ outputs.}\label{tab:architecture}
\end{table}

\begin{figure*}[h]
    \centering
    \includegraphics[width=\linewidth]{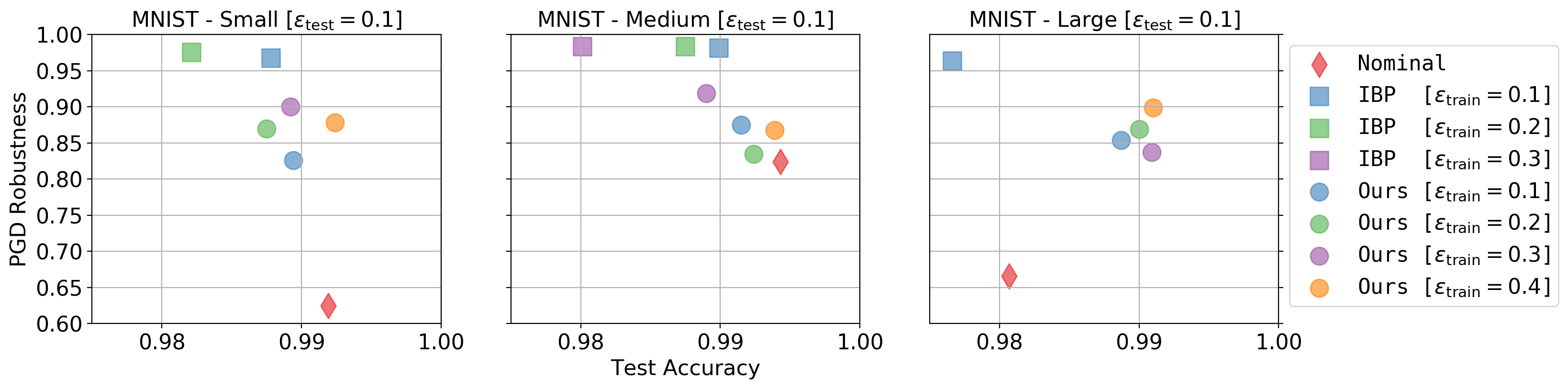}
    \includegraphics[width=\linewidth]{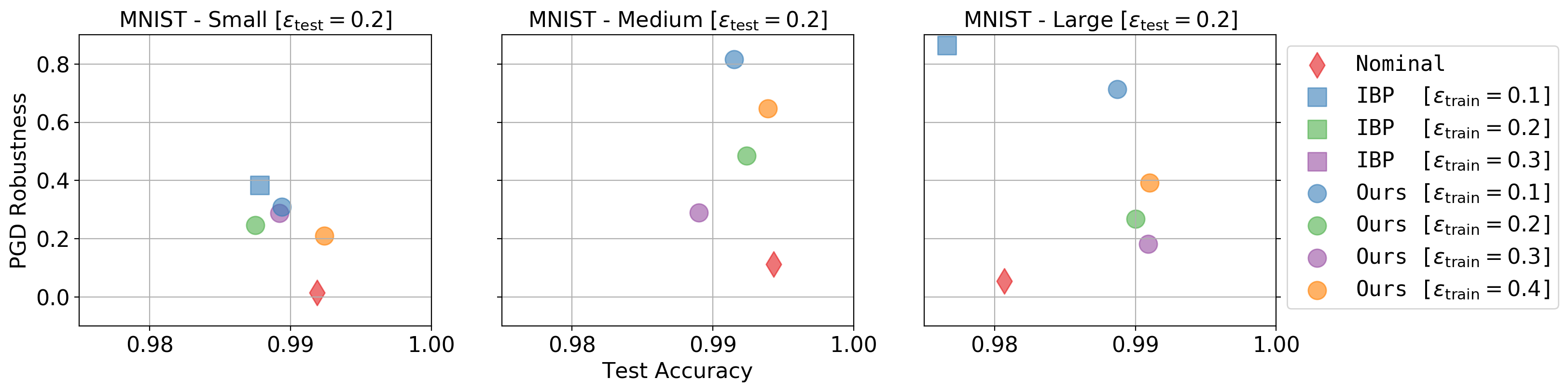}
    \includegraphics[width=\linewidth]{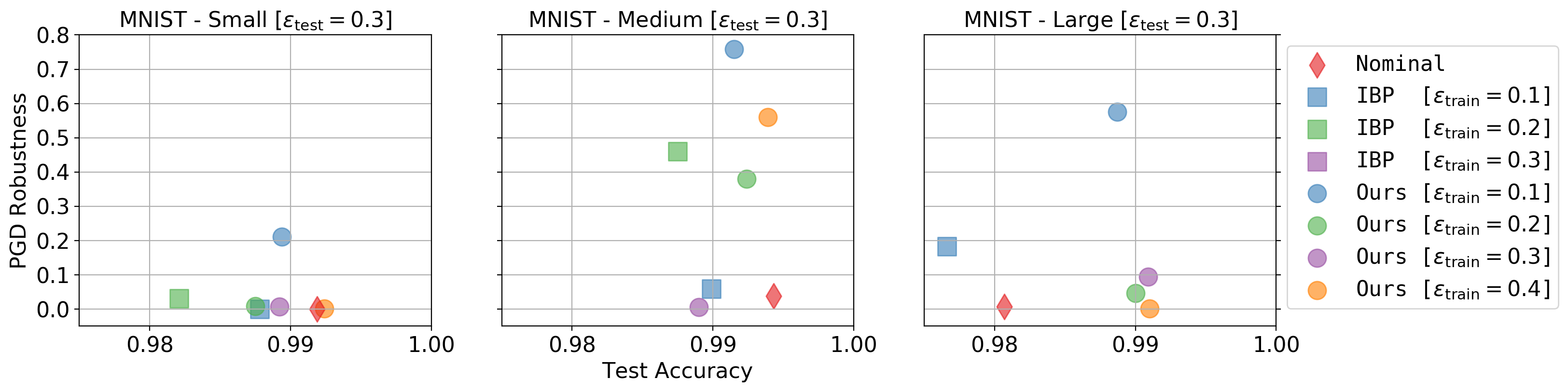}
    \includegraphics[width=\linewidth]{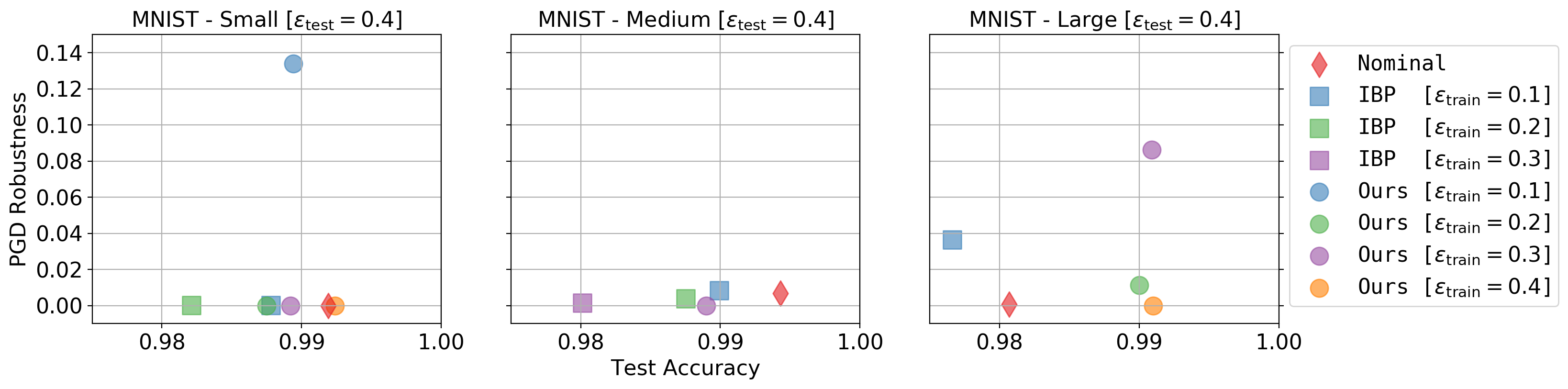}
    \caption{\textbf{PGD vs Accuracy on MNIST.} $\epsilon_\text{test} = 0.1,0.2,0.3$ and $0.4$, respectively.}\label{fig:mnist_train_1}
\end{figure*}

\begin{figure*}[h]
    \centering
    \includegraphics[width=\linewidth]{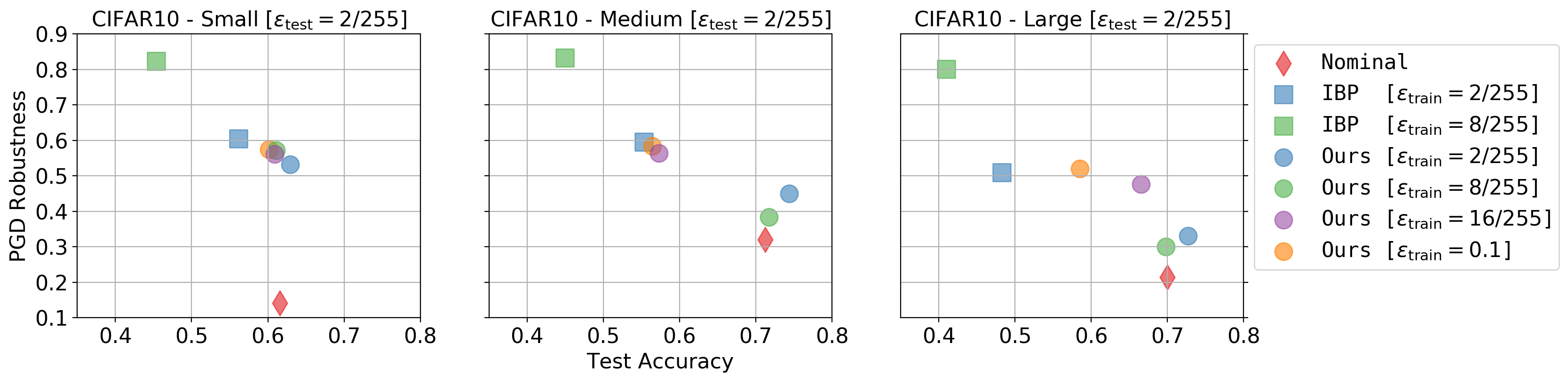}
    \includegraphics[width=\linewidth]{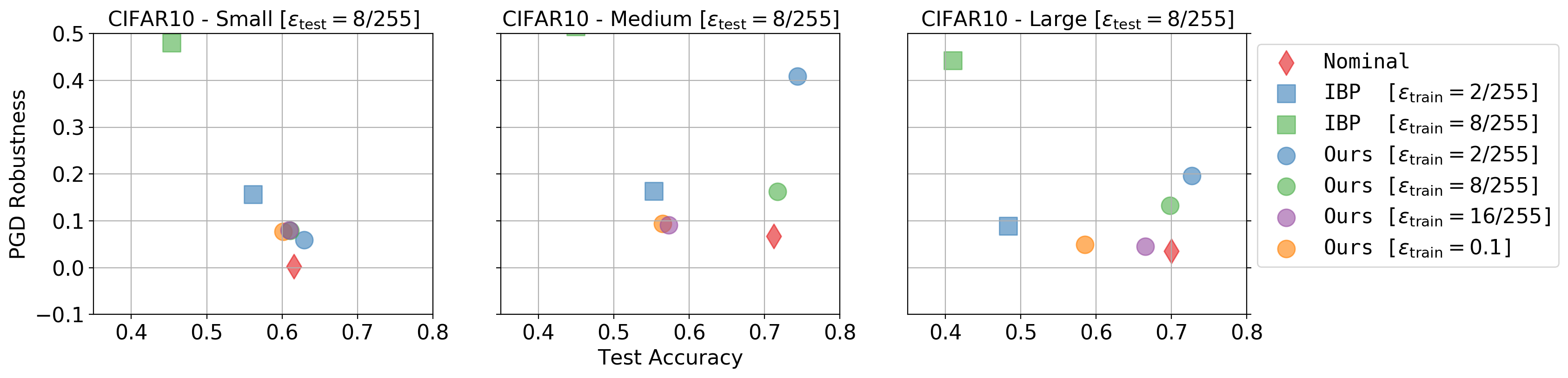}
    \includegraphics[width=\linewidth]{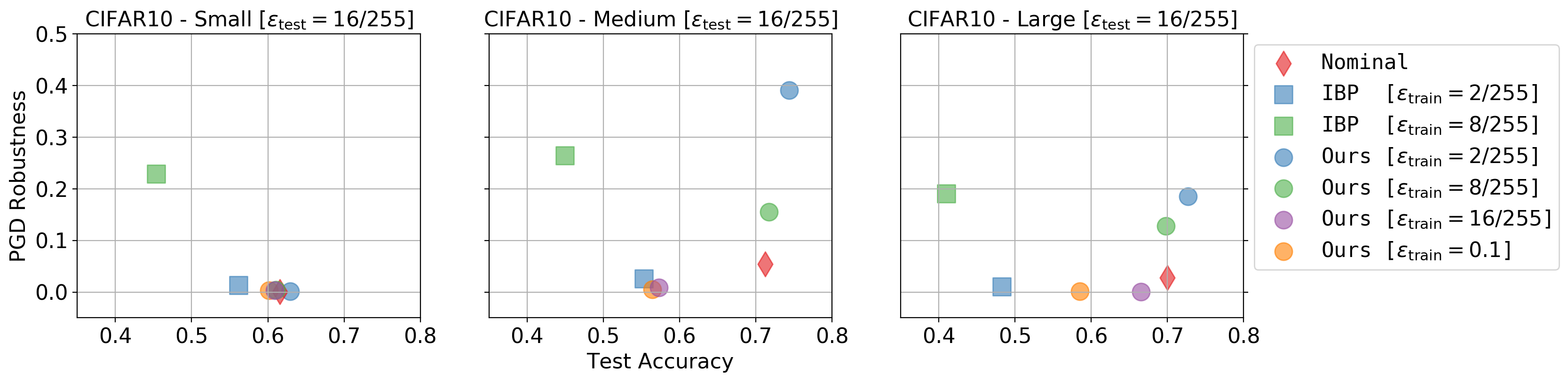}
    \includegraphics[width=\linewidth]{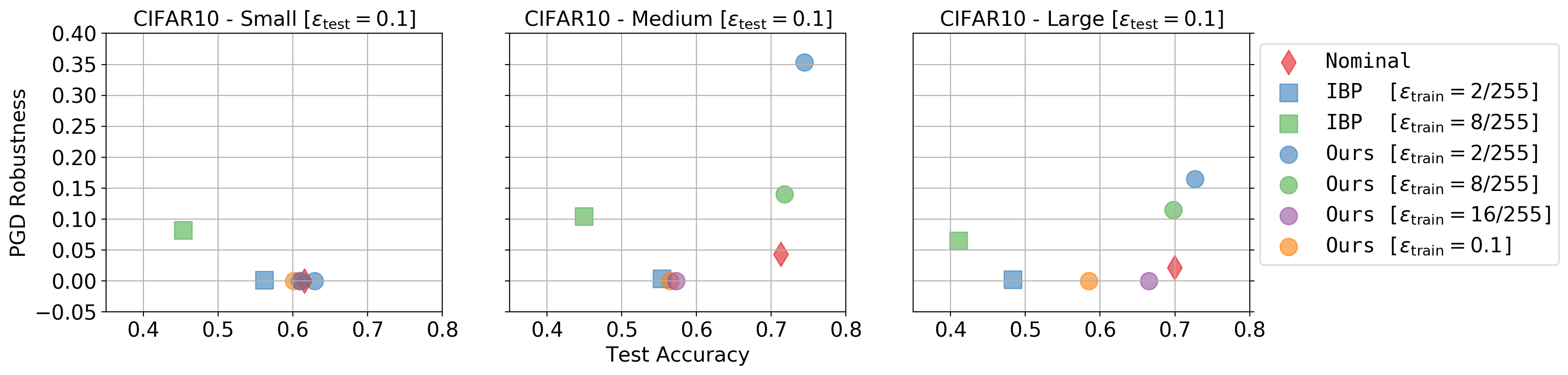}
    \caption{\textbf{PGD vs Accuracy on CIFAR10.} $\epsilon_\text{test} = \frac{2}{255}, \frac{8}{255}, \frac{16}{255}$ and $\frac{25.5}{255}$, respectively.}\label{fig:cifar10_train_2}
\end{figure*}
\clearpage

\section{ETB as Supersets in Expectation} \label{sec:superset_experiment}
Here, we validate Theorem \ref{thm:correctness_thm1} with several controlled experiments. For a network $g(\mathbf{x}) = \mathbf{a}_2^\top \max\left(\mathbf{A}_1\mathbf{x} + \mathbf{b}_1, \mathbf{0}_n\right) + b_2$ that has true bounds $[\mathbf{L}_{g}, \mathbf{U}_{g}]$ for $\tilde{\mathbf{x}} \in [\mathbf{x} -\epsilon\mathbf{1}_n, \mathbf{x} + \epsilon\mathbf{1}_n]$, we empirically show under the mild assumptions of Theorem \ref{thm:correctness_thm1} that $[\mathbf{L}_{\mathbf{ETB}}, \mathbf{U}_{\mathbf{ETB}}]$ is a superset to $[\mathbf{L}_{g},\mathbf{U}_{g}]$ in expectation.

Following the default PyTorch initialization \citep{paszke2017automatic}, we construct the biases. As for the elements of the weight matrices $\mathbf{A}_1 \in \mathbb{R}^{k \times n}$ and $\mathbf{A}_2 \in \mathbb{R}^{1 \times k}$, they are sampled from $\mathcal{N}(0,1/\sqrt{n})$ and $\mathcal{N}(0,1/\sqrt{k})$, respectively. We estimate $\mathbf{L}_{g}$ and $\mathbf{U}_{g}$ by taking the minimum and maximum of $10^{6} + 2^n$ Monte-Carlo evaluations of $g$. For a given $\mathbf{x}\sim\mathcal{N}(\mathbf{0}_n,\mathbf{I})$ and with $\epsilon = 0.1$, we uniformly sample $10^6$ examples from the interval $[\mathbf{x} - \epsilon\mathbf{1}_n,\mathbf{x}+\epsilon\mathbf{1}_n]$. We also sample all $2^n$ corners of the hyper cube $[\mathbf{x}-\epsilon\mathbf{1}_n,\mathbf{x}+\epsilon\mathbf{1}_n]$. To show that the proposed interval $[\mathbf{L}_{\mathbf{ETB}}, \mathbf{U}_{\mathbf{ETB}}]$ is a superset of $[\mathbf{L}_{g}, \mathbf{U}_{g}]$, we evaluate the length of the intersection of the two intervals over the length of the true interval defined as $\Gamma = |[\mathbf{L}_{\mathbf{ETB}},\mathbf{U}_{\mathbf{ETB}}] \cap [\mathbf{L}_{g},\mathbf{U}_{g}] |/|[\mathbf{L}_{g},\mathbf{U}_{g}]|$. Note that $\Gamma = 1$ if and only if $[\mathbf{L}_{\mathbf{ETB}}, \mathbf{U}_{\mathbf{ETB}}]$ is a superset to $[\mathbf{L}_{g}, \mathbf{U}_{g}]$. For a given $n$, we conduct this experiment $10^3$ times with varying $\mathbf{A}_1$, $\mathbf{A}_2$, $\mathbf{b}_1$, $\mathbf{b}_2$ and $\mathbf{x}$ and report the average $\Gamma$. Then, we run this for a varying number of input size $n$ and a varying number of hidden nodes $k$. As predicted by Theorem \ref{thm:correctness_thm1}, Figure \ref{fig:correctness_varying_input_sz} demonstrates that as $n$ increases, the proposed interval will be more likely to be a superset of the true interval, regardless of the number of hidden nodes $k$. Note that networks that are as wide as $k=1000$, require no more than $n=15$ input dimensions for the proposed intervals to be a superset of the true intervals. In practice, $n$ is much larger than that, \eg $n \approx 3 \times 10^3$ in CIFAR10.

In Figure \ref{fig:correctness_fc_varying_num_layers}, we empirically show that the above behavior persists in deeper networks. We propagate the bounds blockwise and conduct similar experiments on fully-connected networks. We vary the network depth but keep $k=n$ fixed. These results indeed suggest that the proposed bounds are supersets to the true bounds in expectation and are more likely so with larger $k$.

\begin{figure}[h]
    \centering
    \begin{subfigure}[t]{0.49\linewidth}
        \centering
        \includegraphics[width=\textwidth]{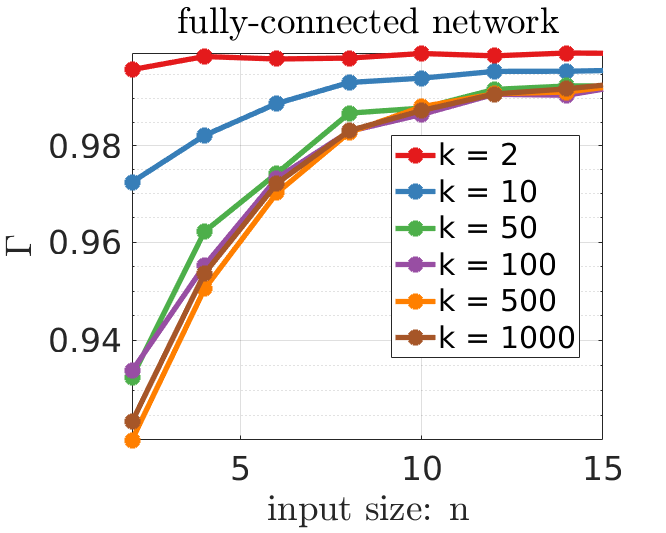}
        \caption{}\label{fig:correctness_varying_input_sz}
    \end{subfigure}%
    ~
    \begin{subfigure}[t]{0.49\linewidth}
        \centering
        \includegraphics[width=\textwidth]{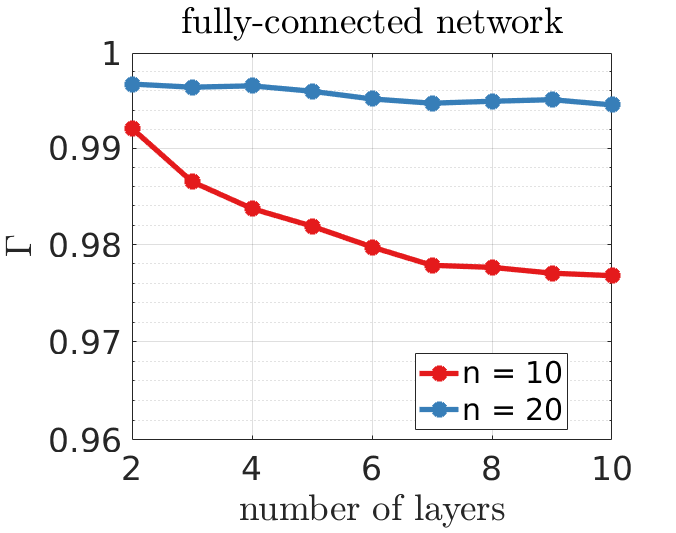}
        \caption{}\label{fig:correctness_fc_varying_num_layers}
    \end{subfigure}
    \caption{\textbf{ETB as supersets in expectation.} On the left, our proposed interval bounds $[\mathbf{L}_{\mathbf{ETB}},\mathbf{U}_{\mathbf{ETB}}]$, as predicted by Theorem \ref{thm:correctness_thm1}, get closer, with increasing $n$, to being a true superset to $[\mathbf{L}_{g},\mathbf{U}_{g}]$, which are estimated by Monte-Carlo Sampling, regardless of the number of hidden nodes $k$. On the right, we see a similar behavior under different depths.}
\end{figure}
\clearpage

\section{Tightness of ETB vs. IBP in Expectation}
In this experiment we focus our attention on the tightness of ETB when compared to IBP. In particular, we validate Theorem \ref{thm:tightness_thm2} by comparing the interval width of our proposed bounds, $W_{\mathbf{ETB}} = \mathbf{U}_{\mathbf{ETB}} - \mathbf{L}_{\mathbf{ETB}}$, with that of IBP, $W_{\mathbf{IBP}} = \mathbf{U}_{\mathbf{IBP}} - \mathbf{L}_{\mathbf{IBP}}$. We compute both the difference and ratio of widths for varying values of $k$, $n$, and $\epsilon$. Figure \ref{fig:tightness_2_layer_network} reports the average width difference and ratio over $10^3$ runs in a similar setup to the previous experiment. Figures \ref{fig:tightness_diff_varying_hidden_sz} and \ref{fig:tightness_ratio_varying_hidden_sz} show that the proposed bounds indeed get tighter than IBP, as $k$ increases across all $\epsilon$ values (as predicted by Theorem \ref{thm:tightness_thm2}). Note that we only show results for $\epsilon = \{0.01,0.1\}$ in Figure \ref{fig:tightness_ratio_varying_hidden_sz} as the performance of $\epsilon = \{0.5,1.0\}$ was very similar to $\epsilon=0.1$. Similar improvement occurs with increasing $n$, as in Figures \ref{fig:tightness_diff_varying_input_sz} and \ref{fig:tightness_ratio_varying_input_sz}.

We also compare the bounds under increasing depth for both fully-connected networks (refer to Figures \ref{fig:tightness_diff_fc_varying_depth} and \ref{fig:tightness_ratio_fc_varying_depth}) and convolutional networks (refer to Figures \ref{fig:tightness_diff_conv_varying_depth} and \ref{fig:tightness_ratio_conv_varying_depth}). For all fully-connected networks, we take $n=k=500$. Our proposed bounds get consistently tighter as the network depth increases over all choices of $\epsilon$. In particular, the proposed bounds can be more than $10^6$ times tighter than IBP for a ten-layer DNN. A similar observation can also be made for convolutional networks, where it is expensive to compute our bounds. So, instead, we obtain matrices $\mathbf{M}$ using the easy-to-compute IBP upper bounds. Despite this relaxation, we still obtain very tight expected bounds. Note that this slightly modified approach reduces exactly to our bounds for two-layer networks.

\begin{figure*}[h]
    \centering
    \begin{subfigure}[t]{0.23\textwidth}
        \centering
        \includegraphics[width=\textwidth]{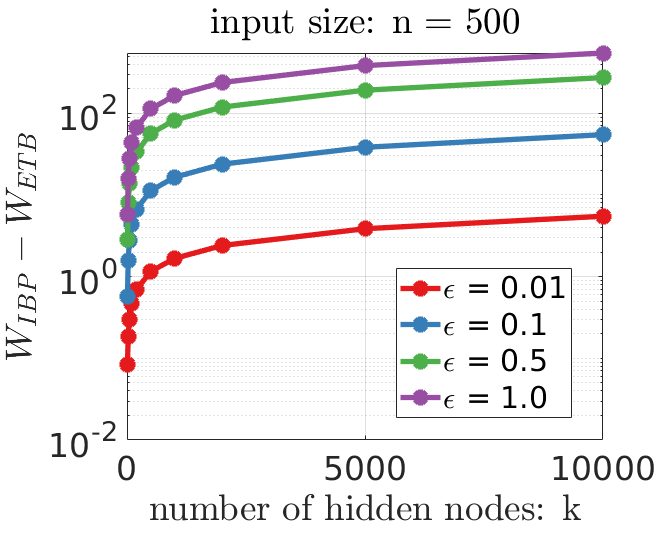}
        \caption{}\label{fig:tightness_diff_varying_hidden_sz}
    \end{subfigure}%
    ~
    \begin{subfigure}[t]{0.23\textwidth}
        \centering
        \includegraphics[width=\textwidth]{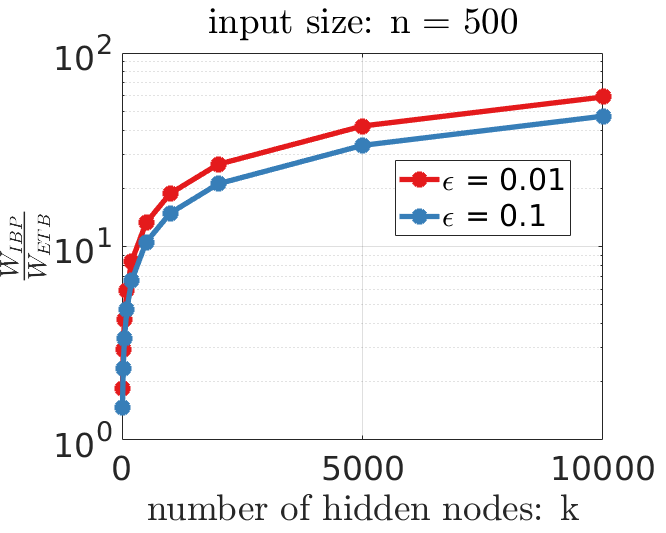}
        \caption{}\label{fig:tightness_ratio_varying_hidden_sz}
    \end{subfigure}
    ~
    \begin{subfigure}[t]{0.23\textwidth}
        \centering
        \includegraphics[width=\textwidth]{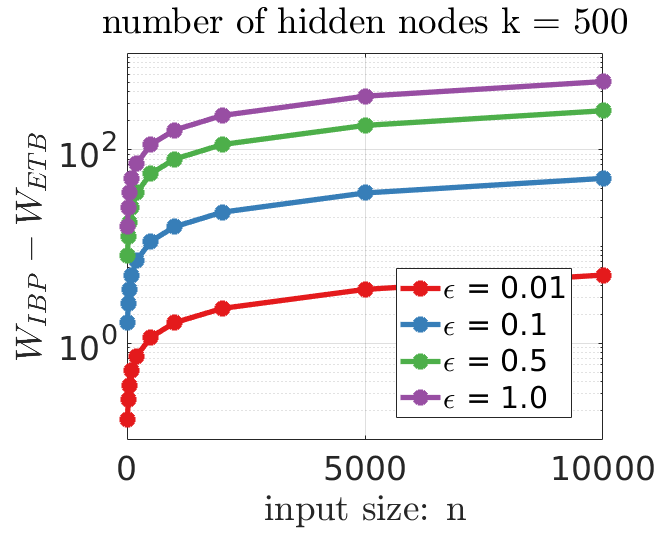}
        \caption{}\label{fig:tightness_diff_varying_input_sz}
    \end{subfigure}
    ~
    \begin{subfigure}[t]{0.23\textwidth}
        \centering
        \includegraphics[width=\textwidth]{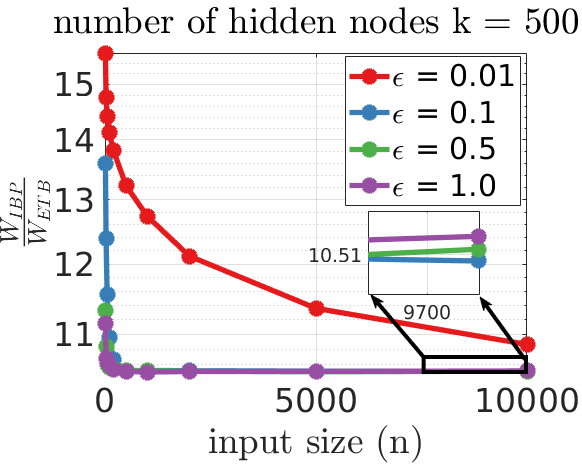}
        \caption{}\label{fig:tightness_ratio_varying_input_sz}
    \end{subfigure}
    \caption{\textbf{ETB is tighter than IBP under various input sizes and number of hidden nodes.} We show an interval bound tightness comparison between bounds of both ETB and IBP by comparing the difference and ratio of their interval lengths with varying $k$, $n$, and $\epsilon$ for a two-layer network. The proposed bounds are significantly tighter than IBP as predicted by Theorem \ref{thm:tightness_thm2}.}\label{fig:tightness_2_layer_network}
\end{figure*}

\begin{figure*}[h]
    \centering
    \begin{subfigure}[t]{0.23\textwidth}
        \centering
        \includegraphics[width=\textwidth]{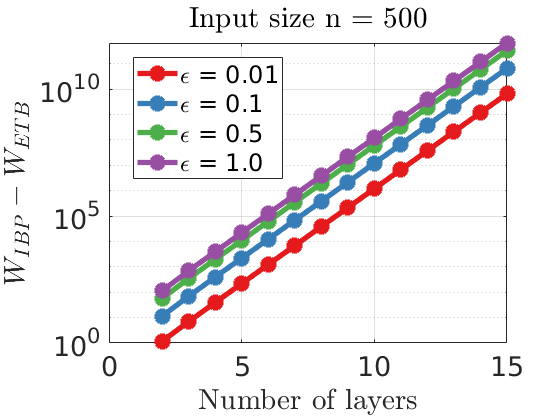}
        \caption{}\label{fig:tightness_diff_fc_varying_depth}
    \end{subfigure}%
    ~
    \begin{subfigure}[t]{0.23\textwidth}
        \centering
        \includegraphics[width=\textwidth]{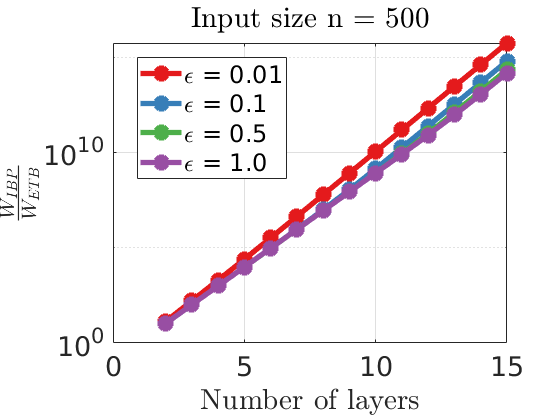}
        \caption{}\label{fig:tightness_ratio_fc_varying_depth}
    \end{subfigure}
    ~
    \begin{subfigure}[t]{0.23\textwidth}
        \centering
        \includegraphics[width=\textwidth]{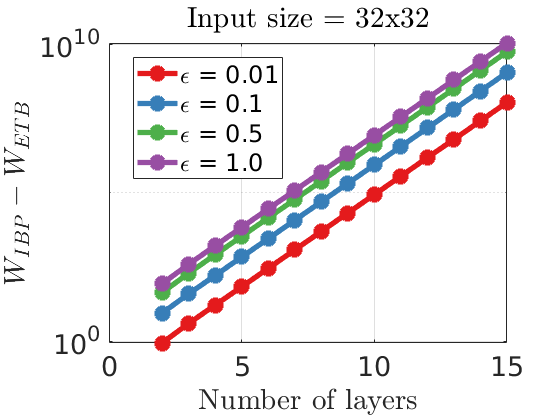}
        \caption{}\label{fig:tightness_diff_conv_varying_depth}
    \end{subfigure}
    ~
    \begin{subfigure}[t]{0.23\textwidth}
        \centering
        \includegraphics[width=\textwidth]{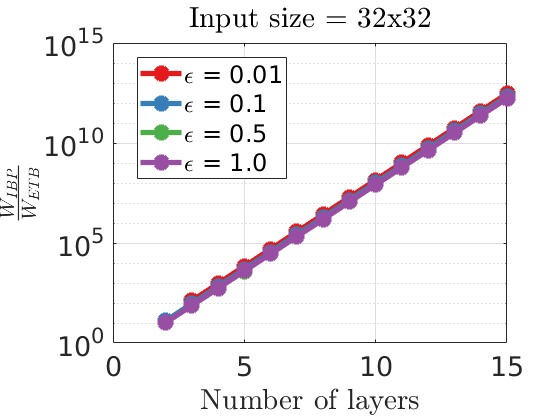}
        \caption{}\label{fig:tightness_ratio_conv_varying_depth}
    \end{subfigure}
    \caption{\textbf{ETB is tighter than IBP in deeper networks.} We show a bound tightness comparison between interval bounds computed by ETB against those of IBP by varying the number of layers for several choices of $\epsilon$. ETB is significantly tighter.}
\end{figure*}
\clearpage

\section{ETB vs. IBP on Real Networks}
We also train a three-layer network on the MNIST dataset with $\sim 99\%$ test accuracy. To show that ETB are also supersets to true bounds on real networks, and since the input dimension is too large for Monte-Carlo sampling ($n = 784$ pixels), we use the MIP formulation by \citet{tjeng19} with an identical parameter setting to \citet{gowal2018effectiveness}. We then report $\Gamma$ over varying testing $\epsilon$. Table \ref{table:correctness_mnist} demonstrates that indeed even on real networks beyond two layers and without the Gaussian weight assumption, our bounds are still supersets to the true bounds computed with the MIP solver and are much tighter than IBP. The results are averaged over $100$ randomly selected MNIST images.

\begin{table}[h]
    \centering
    \begin{tabular}{||c|c|c|c|c|c||}
        \hline
        $\epsilon$ & $\Gamma$         & $\Gamma_{\min}$ & $\Gamma_{\max}$ & $W_{\text{IBP}} - W_{\text{ETB}}$ & $\nicefrac{W_{\text{IBP}}}{W_{\text{ETB}}}$ \\
        \hline
        $0.01$     & $1.0 \pm 0$      & $1.0$           & $1.0$           & $644.322$                         & $17.391$                                    \\
        $0.02$     & $1.0 \pm 0$      & $1.0$           & $1.0$           & $1381.980$                        & $15.270$                                    \\
        $0.03$     & $0.97 \pm 0.088$ & $0.635$         & $1.0$           & $2255.397$                        & $14.555$                                    \\
        \hline
    \end{tabular}
    \caption{\textbf{ETB vs. IBP on Real Networks.} The table shows that our bounds are a superset to the true bounds, computed using an exact MIP solver, and much tighter than IBP.
    }\label{table:correctness_mnist}
\end{table}
\clearpage

\section{Qualitative Results} \label{sec:qualitative}
Following previous works \citep{kolter2017provable,gowal2018effectiveness}, we show qualitative results on synthetic random five-layer fully-connected networks with the architecture (\ie layer dimensions) $n$-$100$-$100$-$100$-$100$-$2$ where $n\in\{2,10,20\}$ is the input size. In Figure \ref{fig:qualitative_tables}, we visualize examples of the interval bounds of ETB and compare them to IBP and the true bounds estimated by Monte-Carlo sampling for several choices of $\epsilon \in \{0.05,0.1,0.25\}$.

\begin{figure}[!ht]
    \centering
    \begin{subfigure}[t]{\textwidth}
        \centering
        \begin{minipage}{0.50\textwidth}
            \includegraphics[width=\textwidth]{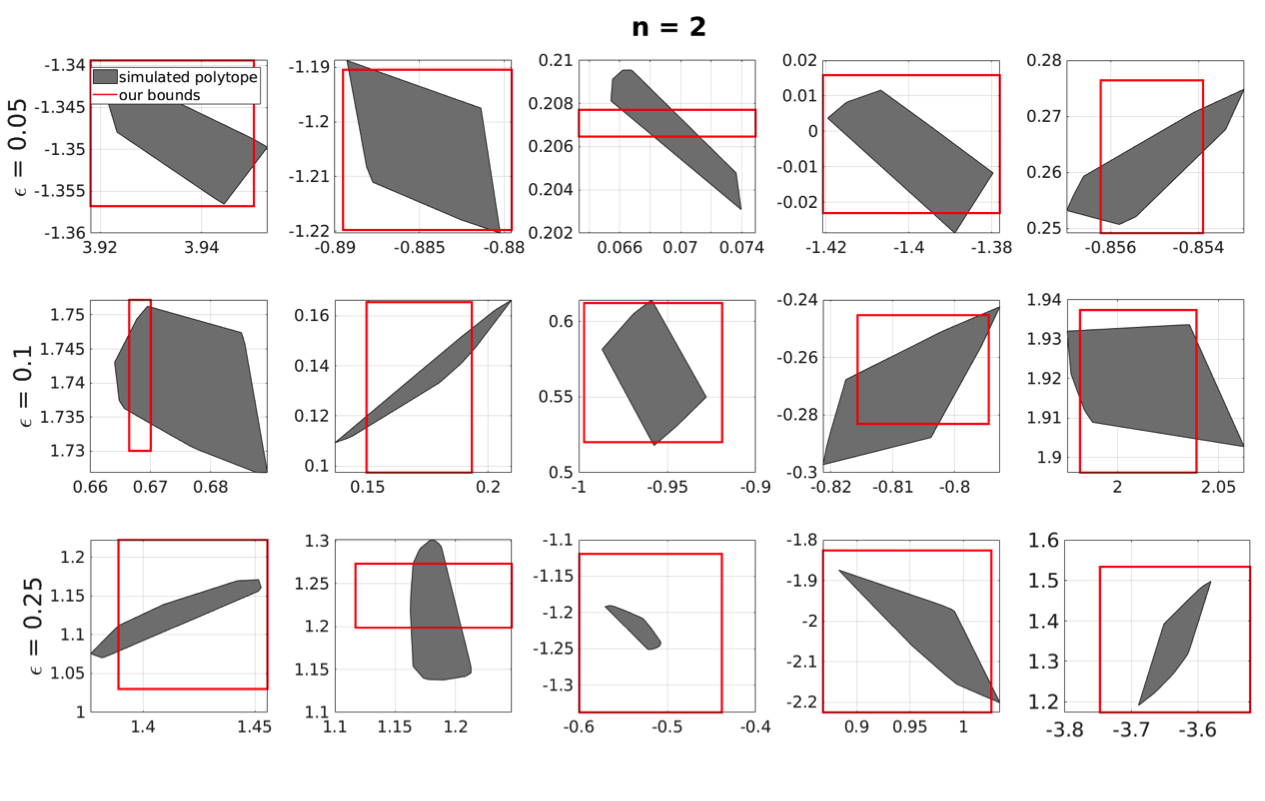}
        \end{minipage}
        \begin{minipage}{0.39\textwidth}
            \scalebox{0.63}{
                \begin{tabular}{c|c|c|c|c}
                    \toprule
                    $\epsilon$, column           & $\mathbf{L}_{\text{IBP}}^x$ & $\mathbf{U}_{\text{IBP}}^x$ & $\mathbf{L}_{\text{IBP}}^y$ & $\mathbf{U}_{\text{IBP}}^y$ \\
                    \midrule
                    $\epsilon=0.05$, column $=1$ & -10.1261          & 19.0773           & -18.1500          & 13.3573           \\
                    $\epsilon=0.05$, column $=2$ & -12.2529          & 14.3428           & -14.4295          & 12.3479           \\
                    $\epsilon=0.05$, column $=3$ & -12.6594          & 14.1837           & -12.5873          & 12.2612           \\
                    $\epsilon=0.05$, column $=4$ & -17.7825          & 16.4048           & -15.3843          & 15.1688           \\
                    $\epsilon=0.05$, column $=5$ & -12.5260          & 11.1149           & -8.9242           & 12.7539           \\
                    \bottomrule
                    \bottomrule
                    $\epsilon=0.1$, column $=1$  & -27.4598          & 23.6603           & -17.9481          & 23.4817           \\
                    $\epsilon=0.1$, column $=2$  & -23.2877          & 34.0542           & -28.1535          & 21.8703           \\
                    $\epsilon=0.1$, column $=3$  & -35.2950          & 36.4901           & -31.7465          & 36.0421           \\
                    $\epsilon=0.1$, column $=4$  & -31.7154          & 29.3062           & -30.3900          & 35.7105           \\
                    $\epsilon=0.1$, column $=5$  & -25.0870          & 39.4373           & -24.5087          & 32.5493           \\
                    \bottomrule
                    \bottomrule
                    $\epsilon=0.25$, column $=1$ & -54.0557          & 56.2884           & -52.5686          & 73.9621           \\
                    $\epsilon=0.25$, column $=2$ & -59.2115          & 82.8742           & -75.7999
                                                        & 65.6898                                                                       \\
                    $\epsilon=0.25$, column $=3$ & -50.1142          & 56.2330           & -72.4221          & 54.4631           \\
                    $\epsilon=0.25$, column $=4$ & -52.6030          & 83.3950           & -92.8100          & 69.1401           \\
                    $\epsilon=0.25$, column $=5$ & -89.1335          & 43.4685           & -74.4519          & 91.5137           \\
                    \bottomrule
                \end{tabular}
            }
        \end{minipage}
        \caption{\textbf{When $n = 1$,} the proposed bounds are far from being true this is as predicted by Theorem \ref{thm:correctness_thm1} for small $n$.}
    \end{subfigure}
    \begin{subfigure}[t]{\textwidth}
        \centering
        \begin{minipage}{0.50\textwidth}
            \includegraphics[width=\textwidth]{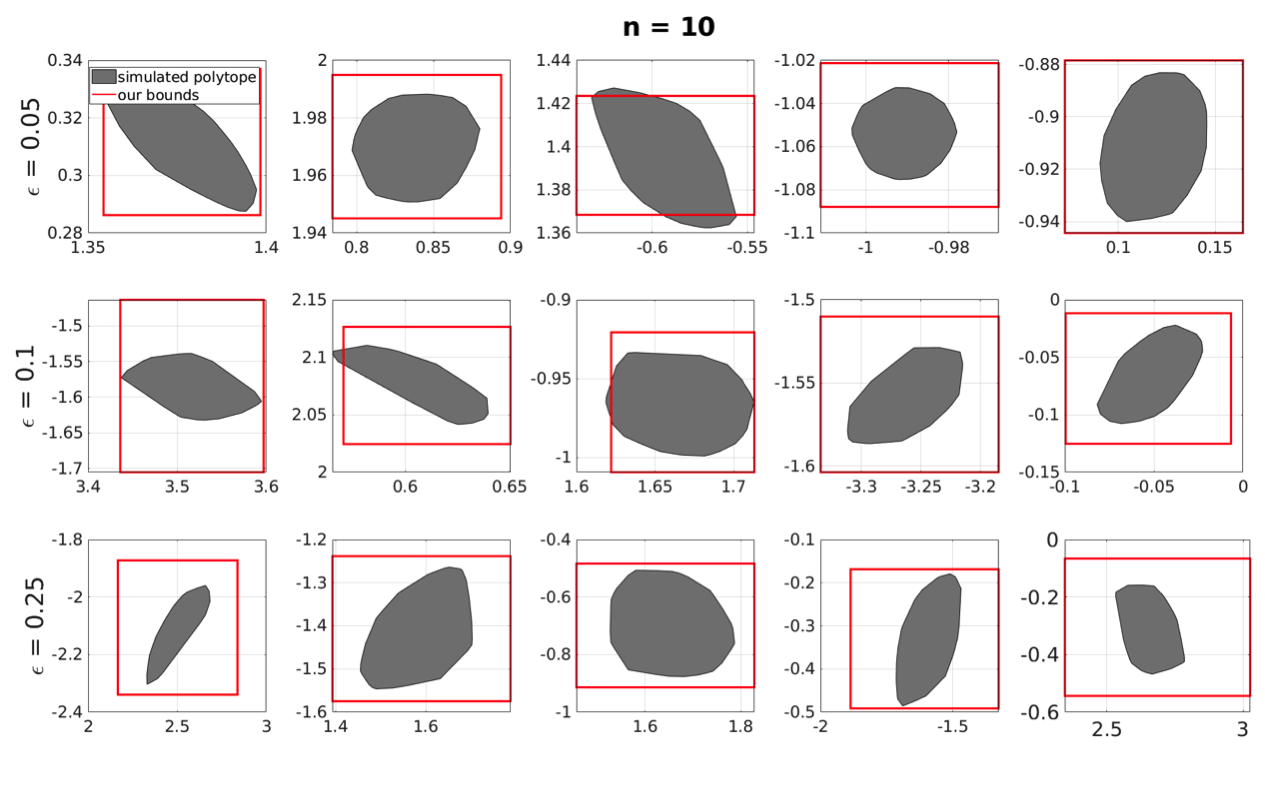}
        \end{minipage}
        \begin{minipage}{0.39\textwidth}
            \scalebox{0.63}{
                \begin{tabular}{c|c|c|c|c}
                    \toprule
                    $\epsilon$, column           & $\mathbf{L}_{\text{IBP}}^x$ & $\mathbf{U}_{\text{IBP}}^x$ & $\mathbf{L}_{\text{IBP}}^y$ & $\mathbf{U}_{\text{IBP}}^y$ \\
                    \midrule
                    $\epsilon=0.05$, column $=1$ & -16.9716          & 24.9259           & -21.2584          & 20.6358           \\
                    $\epsilon=0.05$, column $=2$ & -42.6267          & 48.1786           & -38.6958          & 37.7851           \\
                    $\epsilon=0.05$, column $=3$ & -41.4147          & 36.4056           & -42.0363          & 36.6605           \\
                    $\epsilon=0.05$, column $=4$ & -32.1013          & 25.1485           & -37.7864          & 33.3652           \\
                    $\epsilon=0.05$, column $=5$ & -45.4368          & 32.9774           & -44.8946          & 38.6805           \\
                    \bottomrule
                    \bottomrule
                    $\epsilon=0.1$, column $=1$  & -48.1221          & 86.6800           & -54.3059          & 71.2724           \\
                    $\epsilon=0.1$, column $=2$  & -51.2668          & 46.1237           & -38.8089
                                                        & 33.6512                                                                       \\
                    $\epsilon=0.1$, column $=3$  & -51.3915          & 52.4437           & -52.7149          & 49.1031           \\
                    $\epsilon=0.1$, column $=4$  & -71.7738          & 54.4836           & -91.0335          & 37.0950           \\
                    $\epsilon=0.1$, column $=5$  & -48.1744          & 33.2927           & -40.9540          & 47.2282           \\
                    \bottomrule
                    \bottomrule
                    $\epsilon=0.25$, column $=1$ & -152.7639         & 192.4156          & -188.4030         & 148.2482          \\
                    $\epsilon=0.25$, column $=2$ & -196.8923         & 195.4355          & -163.2691         & 177.3766          \\
                    $\epsilon=0.25$, column $=3$ & -141.6800         & 207.5414          & -207.9396         & 190.2823          \\
                    $\epsilon=0.25$, column $=4$ & -200.7513         & 156.2560          & -227.6427         & 182.0180          \\
                    $\epsilon=0.25$, column $=5$ & -153.3898         & 164.8314          & -147.8662         & 137.4380          \\
                    \bottomrule
                \end{tabular}
            }
        \end{minipage}
        \caption{\textbf{When $n = 10$,} the bounds are more likely now to enclose the true output region for all given $\epsilon$.}
    \end{subfigure}
    \begin{subfigure}[t]{\textwidth}
        \centering
        \begin{minipage}{0.50\textwidth}
            \includegraphics[width=\textwidth]{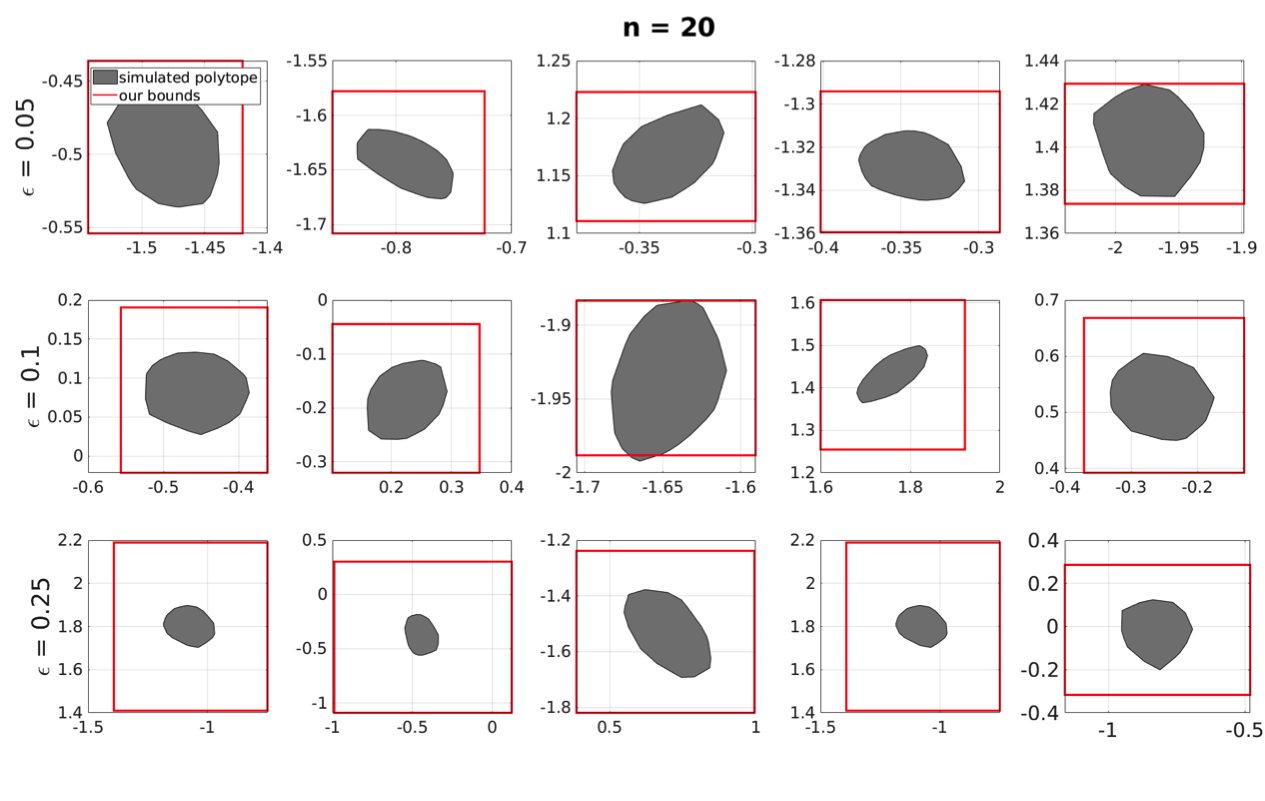}
        \end{minipage}
        \begin{minipage}{0.39\textwidth}
            \scalebox{0.63}{
                \begin{tabular}{c|c|c|c|c}
                    \toprule
                    $\epsilon$, column           & $\mathbf{L}_{\text{IBP}}^x$ & $\mathbf{U}_{\text{IBP}}^x$ & $\mathbf{L}_{\text{IBP}}^y$ & $\mathbf{U}_{\text{IBP}}^y$ \\
                    \midrule
                    $\epsilon=0.05$, column $=1$ & -43.6689          & 32.8572           & -47.7856          & 36.9842           \\
                    $\epsilon=0.05$, column $=2$ & -53.2447          & 47.1651           & -46.5306          & 53.1638           \\
                    $\epsilon=0.05$, column $=3$ & -59.1694          & 42.6647           & -43.4659          & 57.2781           \\
                    $\epsilon=0.05$, column $=4$ & -39.8479          & 42.4197           & -42.1962          & 39.7649           \\
                    $\epsilon=0.05$, column $=5$ & -54.3150          & 42.8637           & -44.5742          & 43.8117           \\
                    \bottomrule
                    \bottomrule
                    $\epsilon=0.1$, column $=1$  & -83.5804          & 81.9034           & -97.5203          & 98.8713           \\
                    $\epsilon=0.1$, column $=2$  & -64.8464          & 76.8083           & -84.9223          & 83.9505           \\
                    $\epsilon=0.1$, column $=3$  & -70.5862          & 92.6652           & -88.6098          & 71.8915           \\
                    $\epsilon=0.1$, column $=4$  & -78.0557          & 151.4360          & -106.073          & 123.3686          \\
                    $\epsilon=0.1$, column $=5$  & -91.8368          & 97.3438           & -103.2845         & 76.6581           \\
                    \bottomrule
                    \bottomrule
                    $\epsilon=0.25$, column $=1$ & -188.7623         & 256.2275          & -211.3972         & 255.5101          \\
                    $\epsilon=0.25$, column $=2$ & -219.5642         & 274.5287          & -217.7622         & 349.4256          \\
                    $\epsilon=0.25$, column $=3$ & -214.7457         & 160.7498          & -186.5554         & 184.1767          \\
                    $\epsilon=0.25$, column $=4$ & -188.7623         & 256.2275          & -211.3972         & 255.5101          \\
                    $\epsilon=0.25$, column $=5$ & -276.9137         & 177.7929          & -202.2031         & 245.8731          \\
                    \bottomrule
                \end{tabular}
            }
        \end{minipage}
        \caption{\textbf{When $n = 20$,} the bounds almost always enclose the polytope.}
    \end{subfigure}
    \caption{\textbf{Qualitative Results.} We show our ETB $[\mathbf{L}_{\text{M}},\mathbf{U}_{\text{M}}]$ in red. In every block, each row represents a choice of $\epsilon$ with 5 different randomly initialized networks. Note that, as predicted by Theorem \ref{thm:correctness_thm1}, ETB improve significantly (in becoming supersets to the true bounds) as $n$ increases. Moreover, they are significantly tighter than IBP (tables on the right) as predicted by Theorem \ref{thm:tightness_thm2}.}\label{fig:qualitative_tables}
\end{figure}
\clearpage

\section{Comments on Assumption \ref{key_assumption}}

\subsection{Hypothetical Failure Example}
Generally speaking, our proposed bounds $[\mathbf{L}_{\mathbf{ETB}}, \mathbf{U}_{\mathbf{ETB}}]$ computed using Equation (\ref{eq:etb}) can be very loose if the network weights $\mathbf{A_1}, \mathbf{a}_2$ do not follow Assumption \ref{key_assumption}. In this section, we show a failure example when this Gaussian premise is violated. Consider a two-layer network $g(\mathbf{x})$ where $\mathbf{A}_1 = 1000~\mathbf{I}_{n \times n}$, $\mathbf{b}_1 = -999~\mathbf{1}_n$, $\mathbf{a}_2 = -10~\mathbf{1}_{n}$, and $b_2 = 0$. For the interval $[-\mathbf{1}_n , \mathbf{1}_n]$, the output lower and upper bounds of the first layer are given as

\begin{equation*}
    \mathbf{L},\mathbf{U}
    = \mathbf{b}_1 \mp \epsilon |\mathbf{A}_1| \mathbf{1}_n
    = \mathbf{1}_n \Big(\mp1000  - 999\Big) = -1999~\mathbf{1}_n,\mathbf{1}_n.
\end{equation*}

Since $\mathbf{U} \ge \mathbf{0}$, then $\mathbf{M} = \mathbf{I}_{n\times n}$. Therefore, our estimated bounds by Equation (\ref{eq:etb}) are

\begin{equation*}
    \mathbf{L}_{\mathbf{ETB}}, \mathbf{U}_{\mathbf{ETB}}
    = \mathbf{a}_2^\top\mathbf{b}_1 \mp \epsilon |\mathbf{a}_2^\top \mathbf{A}_1|\mathbf{1}_n
    = 9990 n \mp 10000~\epsilon \mathbf{1}_n^\top \mathbf{1}_n = -10n, 19990n.
\end{equation*}

As for IBP bounds, they are given as follows:

\begin{align*}
    \mathbf{L}_{\mathbf{IBP}}, \mathbf{U}_{\mathbf{IBP}} & = \mathbf{a}_2^\top \Big(\frac{\max(\mathbf{U},\mathbf{0}_n)  + \max(\mathbf{L},\mathbf{0})}{2}\Big) \mp |\mathbf{a}_2^\top| \Big(\frac{\max(\mathbf{U},\mathbf{0}_n) - \max(\mathbf{L},\mathbf{0})}{2}\Big) \\
                                                         & = -\frac{10}{2} \mathbf{1}_n^\top \mathbf{1}_n \mp \frac{10}{2} \mathbf{1}_n^\top \mathbf{1}_n = -10n, 0.
\end{align*}

Under this construction of network weights, violating the i.i.d Gaussian assumption, it is clear that our bounds can be orders of magnitude looser to IBP. In the following section, we demonstrate that networks trained on real data do have weights that are not far from the Gaussian assumption by empirically investigating the histogram of the network weights.

\begin{figure}[!ht]
    \centering
    \begin{subfigure}[t]{\textwidth}
        \centering
        \includegraphics[width=0.49\textwidth]{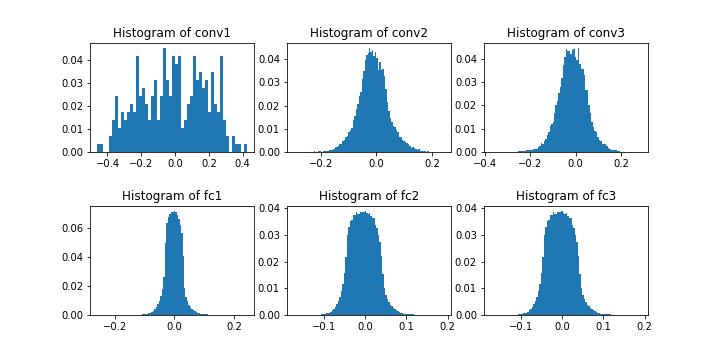}
        \includegraphics[width=0.49\textwidth]{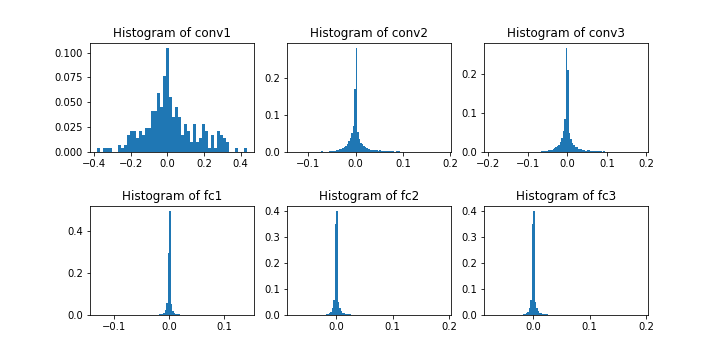}
        \caption{MNIST}\label{fig:mnist_gauss_1_main}
    \end{subfigure}
    \begin{subfigure}[t]{\textwidth}
        \centering
        \includegraphics[width=0.49\textwidth]{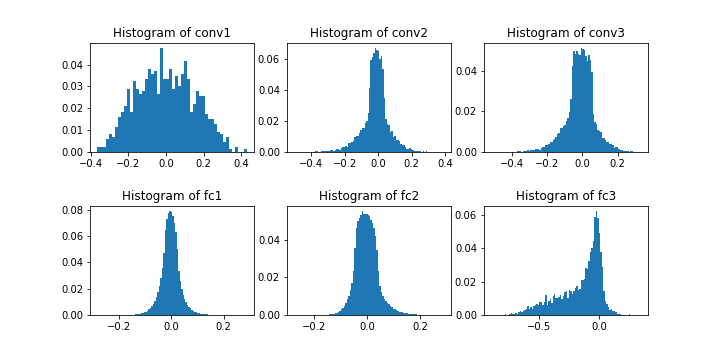}
        \includegraphics[width=0.49\textwidth]{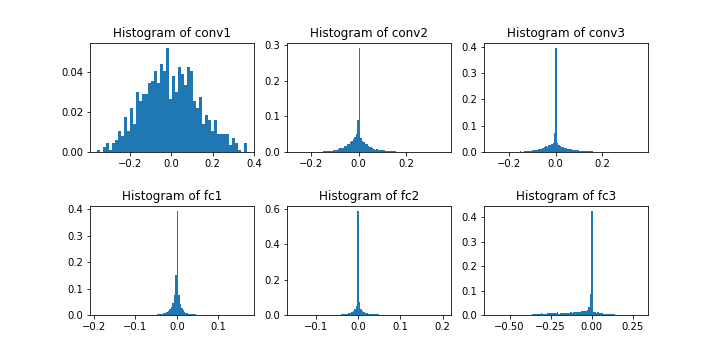}
        \caption{CIFAR10}\label{fig:cifar10_gauss_1_main}
    \end{subfigure}
    \begin{subfigure}[t]{\textwidth}
        \centering
        \includegraphics[width=0.49\textwidth]{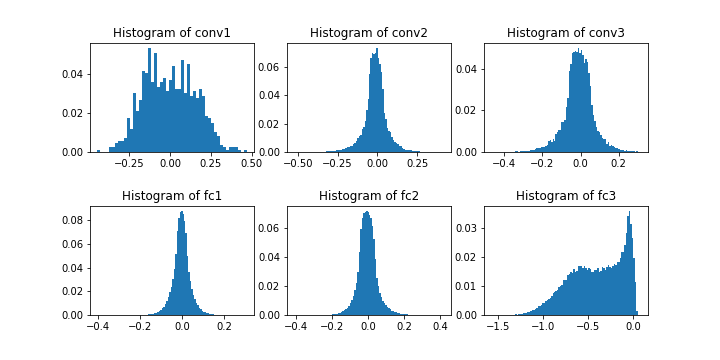}
        \includegraphics[width=0.49\textwidth]{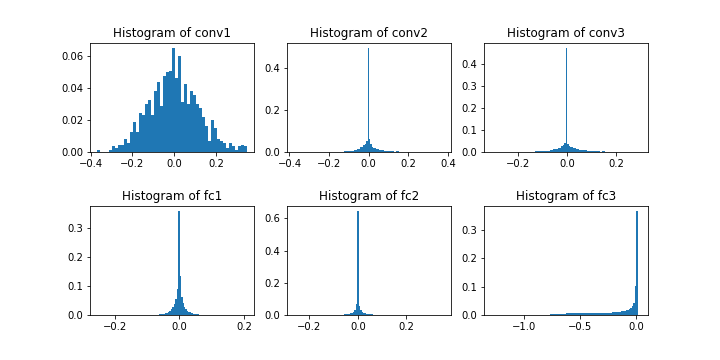}
        \caption{CIFAR100}\label{fig:cifar100_gauss_1_main}
    \end{subfigure}
    \caption{\textbf{Histograms of the network weights trained with and without $\ell_2$ regularization.} The histograms are for the first 6 layers of a medium sized network trained on MNIST, CIFAR10, and CIFAR100. The figures on the left and right are for networks trained without and with $\ell_2$ regularization, respectively.}
\end{figure}

\subsection{Assumption Validity on Real Networks}
While the Gaussian i.i.d. assumption can be strong for general networks trained on real data, it is not far from being reasonable due to commonly accepted training procedures. This is since it is common to regularize network weights while training with an $\ell_2$ regularizer encouraging the weights to follow a zero-mean Gaussian distribution let alone that networks in many cases are initialized in such manner. However, to quantify how reasonable is the assumption of Gaussian i.i.d. weights in trained networks, we visualize the histogram of the weights of a trained medium-sized network (provided by \citet{gowal2018effectiveness}) on MNIST in Figure \ref{fig:mnist_gauss_1_main}. As can be seen, all the histograms of the weights have a center of mass around 0 with a bell shaped looking distribution. A similar observation can be noted for CIFAR10 in Figure \ref{fig:cifar10_gauss_1_main} and CIFAR100 in Figure \ref{fig:cifar100_gauss_1_main}. This provide some evidence that the assumption of standard Gaussian is not far from being realistic.
\clearpage

\section{Proofs}
\begin{proposition}
    For $\mathbf{a} \in \mathbb{R}^n \sim \mathcal{N}(\mathbf{0},\sigma_a^2 \mathbf{I})$ and a uniform random vector $\tilde{\mathbf{x}} \sim \mathcal{U}[\mathbf{x} - \epsilon \mathbf{1}_n,\mathbf{x} + \epsilon\mathbf{1}_n]$ where both $\mathbf{a}$ and $\tilde{\mathbf{x}}$ are independent, we have that Lyapunov Central Limit Theorem holds such that
    \begin{align}
        \frac{1}{s_n} \sum_{i=1}^n (\tilde{x}_i a_i - \mathbb{E}[a_i \tilde{x}_i]) \rightarrow^d \mathcal{N}(0,1), ~~ \text{where} ~~ s_n^2 = \text{Var}\left(\sum_{i=1}^n (\tilde{x}_i a_i - \mathbb{E}[a_i \tilde{x}_i]) \right) \nonumber
    \end{align}
    where $\rightarrow^d$ indicates convergence in distribution.
\end{proposition}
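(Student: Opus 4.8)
The plan is to verify the Lyapunov condition directly, since it is the standard sufficient criterion for a sequence of independent, non-identically distributed summands to obey a central limit theorem. Writing $Y_i = \tilde{x}_i a_i$, these summands are independent across $i$ because the coordinates of $\mathbf{a}$ and of $\tilde{\mathbf{x}}$ are each i.i.d.\ and $\mathbf{a}$ is independent of $\tilde{\mathbf{x}}$. First I would record the first two moments. Since $a_i$ is zero-mean and independent of $\tilde{x}_i$, we get $\mathbb{E}[Y_i] = \mathbb{E}[a_i]\mathbb{E}[\tilde{x}_i] = 0$, so the centering is automatic, $Y_i - \mathbb{E}[a_i\tilde{x}_i] = \tilde{x}_i a_i$. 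Using $\mathbb{E}[a_i^2] = \sigma_a^2$ together with $\mathbb{E}[\tilde{x}_i^2] = x_i^2 + \epsilon^2/3$ (the uniform law on $[x_i-\epsilon, x_i+\epsilon]$ has variance $\epsilon^2/3$), the per-term variance is $\sigma_i^2 = \sigma_a^2\left(x_i^2 + \epsilon^2/3\right)$, and therefore
\[
s_n^2 = \sum_{i=1}^n \sigma_i^2 = \sigma_a^2\left(\|\mathbf{x}\|_2^2 + \frac{n\epsilon^2}{3}\right) \ge \frac{\sigma_a^2 \epsilon^2}{3}\, n .
\]
The crucial observation is this lower bound: regardless of the entries $x_i$, the injected uniform noise forces $s_n^2$ to grow at least linearly in $n$, hence $s_n^4 = \Omega(n^2)$.

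Next I would take $\delta = 2$ in the Lyapunov condition and control the fourth absolute moments. Using the Gaussian fourth moment $\mathbb{E}[a_i^4] = 3\sigma_a^4$ and the uniform fourth moment $\mathbb{E}[\tilde{x}_i^4] = x_i^4 + 2x_i^2\epsilon^2 + \epsilon^4/5$, independence gives $\mathbb{E}[|Y_i|^4] = 3\sigma_a^4\left(x_i^4 + 2x_i^2\epsilon^2 + \epsilon^4/5\right)$. Assuming the inputs are bounded (as for normalized images, $|x_i| \le B$), each such term is at most a constant $C = C(B,\epsilon,\sigma_a)$, so $\sum_{i=1}^n \mathbb{E}[|Y_i|^4] \le nC$. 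Combining this with the quadratic lower bound on $s_n^4$ yields
\[
\frac{1}{s_n^4}\sum_{i=1}^n \mathbb{E}\left[|Y_i|^4\right] \le \frac{nC}{\left(\sigma_a^2 \epsilon^2/3\right)^2 n^2} = \frac{9C}{\sigma_a^4 \epsilon^4}\cdot \frac{1}{n} \longrightarrow 0 \quad (n\to\infty),
\]
which is precisely the Lyapunov condition for $\delta = 2$. Invoking the Lyapunov central limit theorem then delivers the claimed convergence $s_n^{-1}\sum_{i=1}^n(\tilde{x}_i a_i - \mathbb{E}[a_i\tilde{x}_i]) \rightarrow^d \mathcal{N}(0,1)$.

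I expect the only delicate point — better framed as a hypothesis than as an obstacle — to be the matching of growth rates, since the whole argument hinges on the denominator $s_n^{2+\delta}$ outpacing the numerator. This works because $\epsilon > 0$ guarantees a strictly positive per-coordinate noise variance, making $s_n^2$ grow linearly even when the signal energy $\|\mathbf{x}\|_2^2$ does not, while boundedness of the inputs keeps the fourth moments uniformly controlled. If either ingredient were dropped (a degenerate $\epsilon = 0$, or coordinates growing without bound) the ratio need not vanish; but both hold in the intended setting, so no further work is required and the routine moment computations indicated above can be filled in mechanically.
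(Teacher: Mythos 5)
Your proposal is correct and follows essentially the same route as the paper's proof: both verify the Lyapunov condition with $\delta=2$, compute $s_n^2=\sigma_a^2\bigl(n\epsilon^2/3+\sum_i x_i^2\bigr)$, bound the sum of fourth moments by $O(n)$ using boundedness of the $x_i$ (the paper uses $x_{\max},x_{\min}$ where you use $|x_i|\le B$), and conclude since $s_n^4=\Omega(n^2)$. Your closed form $3\sigma_a^4\bigl(x_i^4+2x_i^2\epsilon^2+\epsilon^4/5\bigr)$ is just the expanded version of the paper's $\tfrac{3\sigma_a^4}{10\epsilon}\bigl[(x_i+\epsilon)^5-(x_i-\epsilon)^5\bigr]$, so the two arguments coincide.
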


\proof{
    The Lyapunov condition
    \begin{align}
        \exists \delta >0, \frac{1}{s_n^{2+\delta}} \sum_{i=1}^n \mathbb{E}\left[\big| \tilde{x}_i a_i - \mathbb{E}[a_i \tilde{x}_i] \big|^{2+\delta}\right] \rightarrow 0, \text{as} ~~ n \rightarrow \infty
    \end{align}
    is sufficient for Lyapunov Central Limit Theorem to hold.  Note that
    \begin{align}
        s_n^2 = \sum_{i=1}^2 \text{Var}\left(\tilde{x}_i a_i\right) = \sum_{i=1}^n \mathbb{E}\left[a_i^2 \tilde{x}_i^2\right] = \sigma_a^2 \sum_{i=1}^n \left(\frac{\epsilon^2}{3} + x_i^2\right) = \sigma_a^2 \left(\frac{n\epsilon^2}{3} + \sum_{i=1}^n x_i^2\right).
    \end{align}
    Since for $\delta = 2$, we have that
    \begin{align*}
        \mathbb{E}[|a_i \tilde{x}_i|^{2+\delta}] & = \int_{-\infty}^{\infty} \int_{x_i - \epsilon}^{x_i + \epsilon} a_i^2 \tilde{x}_i^2 \frac{1}{2 \epsilon} \frac{1}{\sqrt{2 \pi} \sigma_a} \exp\left(- \frac{a_i^2}{2 \sigma_a^2}\right) d a_i d \tilde{x}_i \\
                                                 & =\frac{3 \sigma_a^4}{2 \epsilon} \int_{x_i - \epsilon}^{x_i + \epsilon}  \tilde{x}_i^4  d\tilde{x}_i = \frac{3 \sigma_a^4}{10 \epsilon} \left[(x_i + \epsilon)^5 - (x_i - \epsilon)^5\right].
    \end{align*}
    Thereafter, Lyapunov Central Limit Theorem with $\delta = 2$ is satisfied since
    \begin{align*}
        \lim_{n \to \infty}\frac{1}{s_n^{4}} \sum_{i=1}^n \mathbb{E}\left[\big| \tilde{x}_i a_i  -    \mathbb{E}[a_i \tilde{x}_i]  \big|^{2+\delta}\right] & =   \lim_{n \to \infty} \frac{1}{s_n^{4}} \sum_{i=1}^n \mathbb{E}\left[\big| \tilde{x}_i a_i \big|^{4}\right]                                                                                  \\
                                                                                                                                                           & =  \lim_{n \to \infty} \frac{3  \sum_{i=1}^n (x_i + \epsilon)^5 - (x_i - \epsilon)^5}{10 \epsilon \left(\frac{n \epsilon^2}{3} + \sum_{i=1}^n x_i^2\right)^2}                                  \\
                                                                                                                                                           & \leq    \lim_{n \to \infty} \frac{3  n \left((x_{\text{max}} + \epsilon)^5 - (x_{\text{min}} - \epsilon)^5\right)}{10 \epsilon \left(\frac{n \epsilon^2}{3} + \sum_{i=1}^n x_i^2\right)^2} = 0 \\
    \end{align*}
}

\begin{proposition}
    A random matrix $\mathbf{A}_1$ with i.i.d. Gaussian elements of zero mean and $\sigma_{\mathbf{A}_1}$ standard deviation and a uniform random vector $\tilde{\mathbf{x}} \sim \mathcal{U} \left[\mathbf{x} - \epsilon \mathbf{1}_n, \mathbf{x} + \epsilon\mathbf{1}_n\right]$ we have that $\text{Covariance}\left(\mathbf{A}\tilde{\mathbf{x}}\right) = \left(\frac{\epsilon^2 \sigma_{\mathbf{A}_1}^2 n}{3} +  \sigma_{\mathbf{A}_1}^2 \text{trace}\left(\mathbf{x} \mathbf{x}^\top\right)\right) \mathbf{I}$.
\end{proposition}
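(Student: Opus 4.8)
The plan is to compute the $k\times k$ covariance matrix of $\mathbf{y} := \mathbf{A}_1\tilde{\mathbf{x}}$ entrywise from first principles, exploiting the independence of $\mathbf{A}_1$ and $\tilde{\mathbf{x}}$ together with the i.i.d.\ zero-mean structure of the entries of $\mathbf{A}_1$. Writing $y_i = \sum_{j=1}^n A_{ij}\tilde{x}_j$ for $i = 1,\dots,k$, I would first record the mean: since each entry $A_{ij}$ has zero mean and is independent of $\tilde{\mathbf{x}}$, we get $\mathbb{E}[y_i] = \sum_j \mathbb{E}[A_{ij}]\,\mathbb{E}[\tilde{x}_j] = 0$. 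Hence $\text{Cov}(\mathbf{y})_{il} = \mathbb{E}[y_i y_l]$, and the whole computation reduces to second moments.

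Next I would expand the product and factor using independence of the matrix and the input, giving $\mathbb{E}[y_i y_l] = \sum_{j,m}\mathbb{E}[A_{ij}A_{lm}]\,\mathbb{E}[\tilde{x}_j\tilde{x}_m]$. The decisive simplification is that the entries of $\mathbf{A}_1$ are mutually independent with mean zero and variance $\sigma_{\mathbf{A}_1}^2$, so $\mathbb{E}[A_{ij}A_{lm}] = \sigma_{\mathbf{A}_1}^2\,\delta_{il}\delta_{jm}$. This annihilates every off-diagonal entry ($i\neq l$), which is exactly what establishes that $\text{Cov}(\mathbf{y})$ is isotropic, i.e.\ a scalar multiple of $\mathbf{I}$. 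On the diagonal the double sum collapses to $\mathbb{E}[y_i^2] = \sigma_{\mathbf{A}_1}^2\sum_{j=1}^n \mathbb{E}[\tilde{x}_j^2]$, which is independent of $i$.

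Finally I would evaluate the second moment of each coordinate of the input. Because $\tilde{x}_j$ is uniform on $[x_j-\epsilon,\,x_j+\epsilon]$, it has mean $x_j$ and variance $(2\epsilon)^2/12 = \epsilon^2/3$, so $\mathbb{E}[\tilde{x}_j^2] = \tfrac{1}{3}\epsilon^2 + x_j^2$. Summing over $j$ yields $\sum_{j=1}^n \mathbb{E}[\tilde{x}_j^2] = \tfrac{1}{3}\epsilon^2 n + \|\mathbf{x}\|_2^2 = \tfrac{1}{3}\epsilon^2 n + \text{trace}(\mathbf{x}\mathbf{x}^\top)$, and multiplying through by $\sigma_{\mathbf{A}_1}^2\mathbf{I}$ reproduces the claimed expression $\left(\tfrac{1}{3}\epsilon^2\sigma_{\mathbf{A}_1}^2 n + \sigma_{\mathbf{A}_1}^2\,\text{trace}(\mathbf{x}\mathbf{x}^\top)\right)\mathbf{I}$.

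There is no genuine obstacle here; this is a direct moment calculation rather than a limiting or approximation argument. The only points requiring care are the bookkeeping that isolates the diagonal through the Kronecker factor $\delta_{il}\delta_{jm}$ (which is where both the vanishing of cross-coordinate terms and the vanishing of off-diagonal blocks come from), and the elementary but easy-to-misplace second moment of a uniform variable, for which one must remember to add back the squared mean $x_j^2$ rather than just the variance $\epsilon^2/3$.
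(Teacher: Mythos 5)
Your proposal is correct and is essentially the same argument as the paper's: both reduce the covariance to a second-moment computation using $\mathbb{E}[\mathbf{A}_1\tilde{\mathbf{x}}]=\mathbf{0}$, exploit independence of $\mathbf{A}_1$ and $\tilde{\mathbf{x}}$ together with $\mathbb{E}[A_{ij}A_{lm}]=\sigma_{\mathbf{A}_1}^2\delta_{il}\delta_{jm}$ to kill off-diagonal terms, and plug in $\mathbb{E}[\tilde{x}_j^2]=\tfrac{\epsilon^2}{3}+x_j^2$. The paper merely organizes the same computation in matrix form (conditioning on $\mathbf{A}_1$ to insert $\mathbb{E}[\tilde{\mathbf{x}}\tilde{\mathbf{x}}^\top]=\tfrac{\epsilon^2}{3}\mathbf{I}+\mathbf{x}\mathbf{x}^\top$ before averaging over $\mathbf{A}_1$), whereas you carry it out entrywise.
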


\proof{
    The former follows from the fact that
    \begin{align*}
        \text{Covariance}\left(\mathbf{A}_1\tilde{\mathbf{x}} + \mathbf{b}_1\right) & = \text{Covariance}\left(\mathbf{A}_1 \tilde{\mathbf{x}}\right)                                                                                                                        \\
                                                                                    & = \mathbb{E}\left[\mathbf{A}_1 \tilde{\mathbf{x}} \tilde{\mathbf{x}}^\top\mathbf{A}_1^\top\right]                                                                                      \\
                                                                                    & = \mathbb{E}_{\mathbf{A}_1} \left[\mathbf{A}_1 \mathbb{E}\left[\tilde{\mathbf{x}}\tilde{\mathbf{x}}^\top\right]\mathbf{A}_1^\top\right]                                                \\
                                                                                    & = \mathbb{E}_{\mathbf{A}_1} \left[\mathbf{A}_1 \left(\text{Diag}\left(\frac{\epsilon^2}{3}\right) + \mathbf{x} \mathbf{x}^\top \right) \mathbf{A}_1^\top\right]                        \\
                                                                                    & = \frac{\epsilon^2}{3}\mathbb{E}_{\mathbf{A}_1} \left[\mathbf{A}_1 \mathbf{A}_1^\top\right] + \mathbb{E}\left[\mathbf{A}_1 \mathbf{x} \mathbf{x}^\top \mathbf{A}_1^\top\right]         \\
                                                                                    & =  \frac{\epsilon^2}{3}\mathbb{E}_{\mathbf{A}_1} \left[\mathbf{A}_1 \mathbf{A}_1^\top\right] +  \sigma_{\mathbf{A}_1}^2 \text{trace}\left(\mathbf{x} \mathbf{x}^\top\right) \mathbf{I} \\
                                                                                    & = \left(\frac{\epsilon^2 \sigma_{\mathbf{A}_1}^2 n}{3} + \sigma_{\mathbf{A}_1}^2 \text{trace}\left(\mathbf{x} \mathbf{x}^\top\right)\right) \mathbf{I}
    \end{align*}
    The last equality follows since:
    \begin{align*}
        \left(\mathbb{E}\left[\mathbf{A}_1 \mathbf{x} \mathbf{x}^\top \mathbf{A}_1^\top\right]\right)_{i,j} & = \mathbb{E}\left[\mathbf{a}_i^\top \mathbf{x} \mathbf{x}^\top \mathbf{a}_j\right] = \text{trace}\left(\mathbf{x}\mathbf{x}^\top \mathbb{E}\left[\mathbf{a}_j \mathbf{a}_i^\top\right]\right) \\
                                                                                                            & =
        \begin{cases}
            0                                                                          & \text{if} ~~ i \neq j \\
            \sigma_{\mathbf{A}_1}^2 \text{trace}\left(\mathbf{x}\mathbf{x}^\top\right) & \text{if} ~~ i = j
        \end{cases}
    \end{align*}
}

\begin{theorem}(ETB as Supersets in Expectation) Let Assumption \ref{key_assumption} hold. For a sufficiently large input dimension $n$,
    \begin{equation}
        \setlength{\abovedisplayskip}{-0.25pt}     \setlength{\abovedisplayshortskip}{-0.25pt}
        \begin{aligned}
            \mathbb{E}_{\mathbf{A}_1,\mathbf{a}_2}\left[\mathbf{L}_{\mathbf{ETB}}\right]  \leq \mathbb{E}_{\mathbf{A}_1,\mathbf{a}_2}\left[\mathbf{L}_{\text{\text{true}}}\right] & ~~~\text{ and } & \mathbb{E}_{\mathbf{A}_1,\mathbf{a}_2}\left[\mathbf{U}_{\text{\text{true}}}\right] \leq \mathbb{E}_{\mathbf{A}_1,\mathbf{a}_2}\left[\mathbf{U}_{\mathbf{ETB}}\right].
        \end{aligned}
    \end{equation}
\end{theorem}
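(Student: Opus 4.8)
The plan is to route the comparison through Assumption~\ref{key_assumption} and the Lyapunov-CLT surrogate of Propositions~\ref{lyapunov_clt}--\ref{output_cov_clt}, so that both the lower and the upper statement collapse to a single scalar inequality. Since Assumption~\ref{key_assumption} already supplies $\mathbb{E}[\mathbf{L}_{\text{true}}] \ge \mathbf{L}_{\text{approx}}$ and $\mathbb{E}[\mathbf{U}_{\text{true}}] \le \mathbf{U}_{\text{approx}}$, it is enough to establish $\mathbb{E}[\mathbf{L}_{\mathbf{M}}] \le \mathbf{L}_{\text{approx}}$ and $\mathbf{U}_{\text{approx}} \le \mathbb{E}[\mathbf{U}_{\mathbf{M}}]$. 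First I would split $\mathbf{L}_{\mathbf{M}},\mathbf{U}_{\mathbf{M}} = c \mp r$ from (\ref{eq:Lm_Um}) into a center $c = \mathbf{a}_2^\top\mathbf{M}\mathbf{A}_1\mathbf{x} + \mathbf{a}_2^\top\mathbf{M}\mathbf{b}_1 + b_2$ and a nonnegative radius $r = \epsilon|\mathbf{a}_2^\top\mathbf{M}\mathbf{A}_1|\mathbf{1}_n$. Because $\mathbf{a}_2$ is zero-mean and independent of $\mathbf{A}_1$ (hence of $\mathbf{M}$, a deterministic function of $\mathbf{A}_1$), every term linear in $\mathbf{a}_2$ vanishes in expectation, so $\mathbb{E}[c] = b_2$, giving $\mathbb{E}[\mathbf{L}_{\mathbf{M}}] = b_2 - \mathbb{E}[r]$ and $\mathbb{E}[\mathbf{U}_{\mathbf{M}}] = b_2 + \mathbb{E}[r]$.

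Applying the same independence to the surrogate yields $\mathbb{E}_{\mathbf{a}_2,\tilde{\mathbf{y}}}[\mathbf{a}_2^\top\max(\tilde{\mathbf{y}},\mathbf{0})+b_2] = b_2$, so that $\mathbf{L}_{\text{approx}},\mathbf{U}_{\text{approx}} = b_2 \mp m\sqrt{\text{Var}_{\mathbf{a}_2,\tilde{\mathbf{y}}}[\mathbf{a}_2^\top\max(\tilde{\mathbf{y}},\mathbf{0})]}$, with $\tilde{\mathbf{y}} \sim \mathcal{N}(\mathbf{b}_1,\tau^2\mathbf{I})$ and $\tau^2 = \tfrac{1}{3}\epsilon^2\sigma_{\mathbf{A}_1}^2 n + \sigma_{\mathbf{A}_1}^2\text{trace}(\mathbf{x}\mathbf{x}^\top)$ from Proposition~\ref{output_cov_clt}. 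Both target pairs thus share the identical center $b_2$, and after cancelling it the two inequalities reduce to the single scalar claim
\[
\mathbb{E}[r] \;\ge\; m\sqrt{\text{Var}_{\mathbf{a}_2,\tilde{\mathbf{y}}}\!\left[\mathbf{a}_2^\top\max(\tilde{\mathbf{y}},\mathbf{0})\right]}.
\]

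What remains is a scaling comparison in $n$. For the right-hand side, independence of $\mathbf{a}_2$ and coordinatewise independence of $\tilde{\mathbf{y}}$ give $\text{Var}[\mathbf{a}_2^\top\max(\tilde{\mathbf{y}},\mathbf{0})] = \sigma_{\mathbf{a}_2}^2\sum_{i=1}^k \mathbb{E}[\max(\tilde{y}_i,0)^2]$, and the rectified-Gaussian second moment is $\Theta(\tau^2)$; since $\tau^2 = \Theta(n)$, the right-hand side is $\Theta(m\,\sigma_{\mathbf{a}_2}\sqrt{k}\,\sqrt{n})$. For the left-hand side, I would exploit that the indicator defining $\mathbf{M}$ saturates for large $n$: the entry $(\mathbf{u}_1)_i = (\mathbf{A}_1\mathbf{x})_i + b_{1,i} + \epsilon\sum_l|(\mathbf{A}_1)_{il}|$ has a nonnegative part $\epsilon\sum_l|(\mathbf{A}_1)_{il}| = \Theta(n)$ that dominates the $O(\sqrt{n})$ fluctuation of the remaining terms, so $\Pr[(\mathbf{u}_1)_i \ge 0] \to 1$ and $\mathbf{M} \to \mathbf{I}$. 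Replacing $\mathbf{M}$ by $\mathbf{I}$, each $(\mathbf{a}_2^\top\mathbf{A}_1)_j = \sum_i a_{2,i}(\mathbf{A}_1)_{ij}$ is zero-mean with variance $\sigma_{\mathbf{a}_2}^2\sigma_{\mathbf{A}_1}^2 k$ and, by a CLT in $k$, approximately Gaussian, so $\mathbb{E}|(\mathbf{a}_2^\top\mathbf{A}_1)_j| \approx \sqrt{2/\pi}\,\sigma_{\mathbf{a}_2}\sigma_{\mathbf{A}_1}\sqrt{k}$; summing the $n$ magnitudes gives $\mathbb{E}[r] = \Theta(\epsilon\, n\,\sigma_{\mathbf{a}_2}\sigma_{\mathbf{A}_1}\sqrt{k})$. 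The ratio of the two sides then behaves like $\Theta(\sqrt{n}/m)$, which exceeds $1$ once $n$ is large for fixed $m$ — precisely the ``large input dimension $n$'' hypothesis — closing the argument.

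The main obstacle I anticipate is $\mathbb{E}[r]$, where $\mathbf{M}$ is \emph{not} independent of $\mathbf{A}_1$: the indicator $\mathbbm{1}\{(\mathbf{u}_1)_i\ge 0\}$ is coupled to the very entries $(\mathbf{A}_1)_{ij}$ whose magnitudes are being averaged, so $\mathbb{E}|(\mathbf{a}_2^\top\mathbf{M}\mathbf{A}_1)_j|$ is not literally $\mathbb{E}|\mathcal{N}|$. The whole estimate hinges on showing this coupling is asymptotically negligible, i.e. that $\mathbf{M}\to\mathbf{I}$ fast enough that the induced bias in $\mathbb{E}[r]$ is lower order than $\Theta(n)$, and on controlling the $k$-dimensional CLT error in $\mathbb{E}|\cdot|$ uniformly across the $n$ coordinates before summing. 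Making the two $\Theta(\cdot)$ rates explicit, so that the $\sqrt{n}/m$ dominance is rigorous rather than heuristic, is the crux; the rest is bookkeeping with rectified-Gaussian moments.
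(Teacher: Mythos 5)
Your proposal is correct and follows essentially the same route as the paper's proof: both reduce the claim via Assumption~\ref{key_assumption} to comparing $\mathbb{E}[\mathbf{L}_{\mathbf{M}}]$ with $\mathbf{L}_{\text{approx}}$, observe that the zero-mean $\mathbf{a}_2$ makes both quantities share the center $b_2$ so only the radii matter, and conclude by showing $\mathbb{E}[r]=\Theta(n)$ dominates the $m\sqrt{\mathrm{Var}}$ term, which is $\Theta(\sqrt{n})$. The one obstacle you flag as the crux is lighter than you fear: conditional on $\mathbf{A}_1$ (and hence on $\mathbf{M}$), $\mathbf{a}_2^\top\mathbf{M}\mathbf{A}_1(:,j)$ is \emph{exactly} Gaussian, so the paper evaluates $\mathbb{E}_{\mathbf{a}_2}\bigl[\lvert\mathbf{a}_2^\top\mathbf{M}\mathbf{A}_1(:,j)\rvert\bigr]=\sqrt{2/\pi}\,\sigma_{\mathbf{a}_2}\sqrt{\mathbf{A}_1(:,j)^\top\mathbf{M}\mathbf{A}_1(:,j)}$ with no CLT in $k$ and then handles the coupling with $\mathbf{M}$ by conditioning on the random index set $\{i:\mathbf{u}_1^i\ge 0\}$ and using the chi-distribution mean, rather than your $\mathbf{M}\to\mathbf{I}$ saturation argument.
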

\proof{
\begin{align*}
    \mathbf{L}_{\text{approx}}
     & \approx \mathbb{E}_{\mathbf{a}_2,\tilde{\mathbf{y}}} \left[\mathbf{a}_2^\top \text{max}\left(\tilde{\mathbf{y}},\mathbf{0}\right) + b_2\right]  - m \sqrt{\text{Var}_{\mathbf{a}_2,\tilde{\mathbf{y}}} \left[\mathbf{a}_2^\top \text{max}\left(\tilde{\mathbf{y}},\mathbf{0}\right) + b_2 \right]}                                                                     \\
     & \stackrel{\circledOne}{=} b_2 - m \Big(\mathbb{E}_{\mathbf{a}_2}\left[\text{Var}_{\tilde{\mathbf{y}}}\left(\mathbf{a}_2^\top \text{max}\left(\tilde{\mathbf{y}},\mathbf{0}\right) + b_2|\mathbf{a}_2\right)\right]                                                                                                                                                     \\
     & \quad\quad\quad\quad \quad + \text{Var}_{\mathbf{a}_2}\left(\mathbb{E}_{\tilde{\mathbf{y}}}\left[\mathbf{a}_2^\top \text{max}\left(\tilde{\mathbf{y}},\mathbf{0}\right)+b_2 | \mathbf{a}_2\right]\right)\Big)^{\frac{1}{2}}                                                                                                                                            \\
     & = b_2 - m \Bigg(\mathbb{E}_{\mathbf{a}_2}\left[(\mathbf{a}_2^\top\odot \mathbf{a}_2^\top) \left(\mathbb{E}_{\tilde{\mathbf{y}}} \left[\text{max}^2\left(\tilde{\mathbf{y}},\mathbf{0}\right) \right] - \left(\mathbb{E}_{\tilde{\mathbf{y}}}\left[\text{max}\left(\tilde{\mathbf{y}},\mathbf{0}\right)\right]\right)^2\right)\right]                                   \\
     & \quad\quad\quad\quad \quad  + \text{Var}_{\mathbf{a}_2}\left(\mathbb{E}_{\tilde{\mathbf{y}}}\left[\mathbf{a}_2^\top \text{max}\left(\tilde{\mathbf{y}},\mathbf{0}\right)+b_2 | \mathbf{a}_2\right]\right)\Bigg)^{\frac{1}{2}}                                                                                                                                          \\
     & = b_2 - m \Bigg(\left[\sum_{i=1}^k \sigma_{\mathbf{a}_2}^2 \left(\mathbb{E}_{\tilde{\mathbf{y}}} \left[\text{max}^2\left(\tilde{\mathbf{y}},\mathbf{0}\right) \right]\right)_i - \sum_{i=1}^k \sigma_{\mathbf{a}_2}^2 \left(\mathbb{E}_{\tilde{\mathbf{y}}}\left[\text{max}\left(\tilde{\mathbf{y}},\mathbf{0}\right)\right]\right)^2_i\right]                         \\& \quad\quad \quad\quad \quad  + \sum_{i=1}^k \sigma_{\mathbf{a}_2}^2 \left(\mathbb{E}_{\tilde{\mathbf{y}}}\left[ \text{max}\left(\tilde{\mathbf{y}},\mathbf{0}\right)\right]\right)^2_i\Bigg)^{\frac{1}{2}} \\
     & = b_2 - m \sigma_{\mathbf{a}_2} \sqrt{\sum_{i=1}^k  \left(\mathbb{E}_{\tilde{\mathbf{y}}} \left[\text{max}^2\left(\tilde{\mathbf{y}},\mathbf{0}\right) \right]\right)_i}                                                                                                                                                                                               \\
     & \stackrel{\circledTwo}{=} b_2 - m \sigma_{\mathbf{a}_2} \underbrace{\sqrt{\left(\mathbf{b}_1^{2i} + \sigma_{\tilde{\mathbf{y}}}^2\right) \odot \Phi\left(\mathbf{b}_1^i \oslash \sigma_{\tilde{\mathbf{y}}}\right) + \left(\mathbf{b}_1^i \odot \sigma_{\tilde{\mathbf{y}}} \odot \phi\left(\mathbf{b}_1^i \oslash \sigma_{\tilde{\mathbf{y}}}\right)\right)}}_{\Psi}.
\end{align*}
Note that $\circledOne$ follows by total expectation and total variance on the two terms, respectively. Lastly, $\circledTwo$ follows from the closed form expression derived in \citet{bibi2018analytic} where $\Phi$ and $\phi$ are the normal cumulative and probability Gaussian density functions, respectively. Note that $\tilde{\mathbf{y}} \sim \mathcal{N} \left(\mathbf{b}_1,\left(\frac{\epsilon^2 \sigma_{\mathbf{A}_1}^2 n}{3} +  \sigma_{\mathbf{A}_1}^2 \text{trace}\left(\mathbf{x} \mathbf{x}^\top\right)\right) \mathbf{I}\right)$ and that $\sigma_{\tilde{\mathbf{y}}}^2 = \left(\frac{\epsilon^2 \sigma_{\mathbf{A}_1}^2 n}{3} +  \sigma_{\mathbf{A}_1}^2 \text{trace}\left(\mathbf{x} \mathbf{x}^\top\right)\right)$.

\begin{align*}
    \mathbf{L}_{\text{approx}} & - \mathbb{E}_{\mathbf{A}_1,\mathbf{a}_2}\left[\mathbf{L}_{\mathbf{ETB}}\right]                                                                                                                                                                        \\
                               & \approx  b_2 - m \sigma_{\mathbf{a}_2} \Psi  - \mathbb{E}_{\mathbf{A}_1,\mathbf{a}_2}\left[\mathbf{a}_2^\top \mathbf{M}\left(\mathbf{A}_1\mathbf{x} + \mathbf{b}_1\right) + b_2 - \epsilon |\mathbf{a}_2^\top\mathbf{M}\mathbf{A}_1|\mathbf{1}\right] \\
                               & = \mathbb{E}_{\mathbf{A}_1,\mathbf{a}_2}\left[\epsilon |\mathbf{a}_2^\top\mathbf{M}\mathbf{A}_1|\mathbf{1}\right]  - m \sigma_{\mathbf{a}_2} \Psi                                                                                                     \\
                               & = \epsilon\mathbb{E}_{\mathbf{A}_1}\left[ \sum_{j=1}^n \mathbb{E}_{\mathbf{a}_2} \left[|\mathbf{a}_2^\top \mathbf{M} \mathbf{A}_1(:,j)| \big| \mathbf{A}_1\right]\right]   - m \sigma_{\mathbf{a}_2} \Psi                                             \\
                               & \stackrel{\circledOne}{=} \epsilon \sqrt{\frac{2}{\pi}}\mathbb{E}_{\mathbf{A}_1}\left[ \sum_{j=1}^n \sqrt{\text{Var}_{\mathbf{a}_2}\left(\mathbf{a}_2^\top \mathbf{M}\mathbf{A}_1(:,j))\right)}\right]  - m \sigma_{\mathbf{a}_2} \Psi                \\
                               & = \epsilon \sigma_{\mathbf{a}_2} \sqrt{\frac{2}{\pi}}\mathbb{E}_{\mathbf{A}_1}\left[ \sum_{j=1}^n \sqrt{\mathbf{A}_1(:,j)^\top \mathbf{M} \mathbf{A}_1(:,j)}\right]  - m \sigma_{\mathbf{a}_2} \Psi                                                   \\
                               & = \epsilon \sigma_{\mathbf{a}_2} \sqrt{\frac{2}{\pi}} \mathbb{E}_{\mathbf{A}_1}\left[ \sum_{j=1}^n \sqrt{\sum_{i=1}^k \mathbf{A}_1(i,j)^2 \mathbbm{1}\left\{\mathbf{u}_1^i \ge 0\right\}}\right]  - m \sigma_{\mathbf{a}_2} \Psi                      \\
                               & = \epsilon \sigma_{\mathbf{a}_2} \sqrt{\frac{2}{\pi}}\mathbb{E}_{|S|}\left[\mathbb{E}_{\mathbf{A}_1}\left[ \sum_{j=1}^n \sqrt{\sum_{i \in S}^k \mathbf{A}_1(i,j)^2 }\right] |\Bigg| |S|\right]  - m \sigma_{\mathbf{a}_2} \Psi                        \\
\end{align*}

\noindent Note that $\circledOne$ follows from the mean of a folded Gaussian. The last equality follows by taking the total expectation where $S$ is the set of indices where $\mathbf{u}_1^i \ge \mathbf{0}$ for all $i \in S$. Since $\mathbf{u}_1$ is random, then $|S|$ is also random. Therefore, one can reparameterize the sum and thus we have:
\begin{align*}
     & \epsilon \sigma_{\mathbf{a}_2} \sqrt{\frac{2}{\pi}}\mathbb{E}_{|S|}\left[\mathbb{E}_{\mathbf{A}_1}\left[ \sum_{j=1}^n \sqrt{\sum_{i \in S}^k \mathbf{A}_1(i,j)^2 }\right] |\Bigg| |S|\right] \\
     & =\frac{2\epsilon \sigma_{\mathbf{A}_1} \sigma_{\mathbf{a}_2} n}{\sqrt{\pi}}\mathbb{E}_{|S|}\left[\frac{\Gamma\left(\frac{|S| +1 }{2}\right)}{\Gamma\left(\frac{|S|}{2}\right)}\right]
    \ge \frac{2\epsilon \sigma_{\mathbf{A}_1} \sigma_{\mathbf{a}_2} n}{\sqrt{\pi}}\mathbb{E}_{|S|}\left[\frac{\sqrt{2} \pi}{\text{e}^{3}}\left(\frac{|S|}{|S|-2}\right)^{\frac{|S|}{2}}\sqrt{|S|-2}\right]
\end{align*}
The inequality follows by Stirling's formula where the right hand side only grows by order $n \mathbb{E}_{|S|}\sqrt{|S|}$ for large $|S|$. Then we have that:
\begin{align*}
     & \mathbf{L}_{\text{approx}} -  \mathbb{E}_{\mathbf{A}_1,\mathbf{a}_2}\left[\mathbf{L}_{\mathbf{ETB}}\right] \ge \underbrace{\frac{2 \sqrt{2\pi}\epsilon \sigma_{\mathbf{A}_1} \sigma_{\mathbf{a}_2} n}{\text{e}^{3}}\mathbb{E}_{|S|}\left[\left(\frac{|S|}{|S|-2}\right)^{\frac{|S|}{2}}\sqrt{|S|-2}\right] }_{\circledOne}                                                                                  \\
     & \quad \quad \quad \quad \quad - m \sigma_{\mathbf{a}_2} \underbrace{\Big(\sum_{i=1}^k  \left(\mathbf{b}_1^{2i} + \sigma_{\tilde{\mathbf{y}}}^2\right) \odot \Phi\left(\mathbf{b}_1^i \oslash \sigma_{\tilde{\mathbf{y}}}\right) + \left(\mathbf{b}_1^i \odot \sigma_{\tilde{\mathbf{y}}} \odot \phi\left(\mathbf{b}_1^i \oslash \sigma_{\tilde{\mathbf{y}}}\right)\right)\Big)^{\frac{1}{2}}}_{\circledTwo}
\end{align*}
Note that if $n$ grows sufficiently faster than $\sqrt{k}$, it grows faster than $\mathbb{E}_{|S|}\sqrt{|S|}$ since $\mathbb{E}_{|S|}[\sqrt{|S|}]\leq \sqrt{k}$ we have that
$\circledOne$ is $\mathcal{O}(n)$ while $\circledTwo$ is $\mathcal{O}(\sqrt{n})$. Thus, for sufficiently large input dimension $n$ we have that $\mathbf{L}_{\text{approx}} \ge \mathbb{E}_{\mathbf{A}_1,\mathbf{a}_2}[\mathbf{L}_{\mathbf{ETB}}]$ and since by construction $\mathbb{E}_{\mathbf{A}_1,\mathbf{a}_2}\left[\mathbf{L}_{\text{true}}\right]\ge \mathbf{L}_{\text{approx}}$ the proof is complete. Note that a symmetric argument can be applied to show that $\mathbb{E}_{\mathbf{A}_1,\mathbf{a}_2}\left[\mathbf{U}_{\mathbf{ETB}}\right] \ge \mathbb{E}_{\mathbf{A}_1,\mathbf{a}_2}\left[\mathbf{U}_{\text{true}}\right]$.
}

\begin{theorem}(ETB vs. IBP in Expectation)
    \label{supp:theorem_tightness}
    Consider an $\epsilon-\ell_\infty$ bounded uniform random variable input $\tilde{\mathbf{x}} \in \left[\mathbf{x}-\epsilon\mathbf{1}_n, \mathbf{x}+ \epsilon\mathbf{1}_n\right]$ to a block of layers in the form Affine-ReLU-Affine (parameterized by $\mathbf{A}_1, \mathbf{b}_1, \mathbf{a}_2$ and $ \mathbf{b}_2$ for the first and second affine layers respectively) and $\mathbf{a}_2^2\sim\mathcal{N}(\mathbf{0},\sigma_{\mathbf{a}_2}\mathbf{I})$. Under the assumption that $\frac{1}{\sqrt{2\pi}} \mathbf{x}_j \mathbf{1}_k^\top \mathbf{A}_1(:,j) + \frac{1}{2n} \mathbf{1}_k^\top \mathbf{b}_1 \ge \epsilon\left( \|\mathbf{A}_1(:,j)\|_2 - \frac{1}{\sqrt{2\pi}} \|\mathbf{A}_1(:,j)\|_1\right)$ $\forall j$, we have: $\mathbb{E}_{\mathbf{a}_2} \left[(\mathbf{U}_{\textbf{IBP}} - \mathbf{L}_{\textbf{IBP}}) - (\mathbf{U}_\mathbf{ETB} - \mathbf{L}_\mathbf{ETB})\right] \ge 0$.
\end{theorem}

\proof{
    Note that
    \begin{align*}
        \left[(\mathbf{U}_{\textbf{IBP}} - \mathbf{L}_{\textbf{IBP}}) - (\mathbf{U}_\mathbf{ETB} - \mathbf{L}_\mathbf{\textbf{M}})\right] & = \epsilon|\mathbf{a}_2^\top| |\mathbf{A}_1| \mathbf{1}_n + \frac{1}{2} |\mathbf{a}_2^\top| |\mathbf{u}_1| - \frac{1}{2} |\mathbf{a}_2^\top| |\mathbf{l}_1| \\
                                                                                                                                          & - 2 \epsilon |\Big|\mathbf{a}_2^\top \text{diag}\left(\mathbbm{1}\left\{\mathbf{u}_1 \ge \mathbf{0}\right\}\right) \mathbf{A}_1|\Big| \mathbf{1}_n
    \end{align*}
    Consider the coordinate splitting functions $S^{++}(.)$, $S^{+-}(.)$, $S^{--}(.)$ and $S^{-+}(.)$ such that for $\mathbf{x} \in \mathbb{R}^n$, $S^{++}(\mathbf{x}) = \mathbf{x} \odot {\mathbbm{1}\left\{\mathbf{u}_1^i \ge 0, \mathbf{l}^i_1 \ge 0\right\}}$ where $\mathbbm{1}\left\{\mathbf{u}_1^i \ge 0, \mathbf{l}^i_1 \ge 0\right\}$ is a vector of all zeros and 1 in the locations where both $\mathbf{u}_1^i,\mathbf{l}_1^i \ge 0$. However, since $\mathbf{u}_1 \ge \mathbf{l}_1$, then $S^{-+}(.) = \mathbf{0}$. Therefore it is clear that for any vector $\mathbf{x}$ and an interval $[\mathbf{l}_1,\mathbf{u}_1]$, we have that
    \begin{equation}
        \begin{aligned}
            \label{supp_index_split}
            \mathbf{x} = S^{++}\left(\mathbf{x}\right) + S^{+-}\left(\mathbf{x}\right) + S^{--}\left(\mathbf{x}\right),
        \end{aligned}
    \end{equation}
    since the sets $\{i;\mathbf{u}_1^i \ge 0, \mathbf{l}_1^i \ge 0\}$, $\{i; \mathbf{u}_1^i \ge 0, \mathbf{l}_1^i \le 0\}$ and $\{i; \mathbf{u}_1^i \leq 0, \mathbf{l}_1^i \leq 0\}$ are disjoints and their union $\{i=1,2,\dots,i=n\}$. We will denote the difference in the interval lengths as $W_{IBP} - W_{ETB}$ for ease of notation. Thus, we have:

    \begin{align*}
        W_{IBP} - W_{ETB} & = \epsilon S^{++}\left(|\mathbf{a}^\top_2|\right)|\mathbf{A}_1| \mathbf{1}_n + \epsilon S^{+-}\left(|\mathbf{a}^\top_2|\right)|\mathbf{A}_1| \mathbf{1}_n                                 \\&+ \epsilon S^{--}\left(|\mathbf{a}^\top_2|\right)|\mathbf{A}_1| \mathbf{1}_n + \frac{1}{2}S^{++}\left(|\mathbf{a}_2^\top|\right)|\mathbf{u}_1| \\
                          & + \frac{1}{2}S^{+-}\left(|\mathbf{a}_2^\top|\right)|\mathbf{u}_1| + \frac{1}{2}S^{--}\left(|\mathbf{a}_2^\top|\right)|\mathbf{u}_1|                                                       \\&- \frac{1}{2} S^{++}\left(|\mathbf{a}_2^\top|\right)|\mathbf{l}_1| -  \frac{1}{2} S^{+-}\left(|\mathbf{a}_2^\top|\right)|\mathbf{l}_1| -  \frac{1}{2} S^{--}\left(|\mathbf{a}_2^\top|\right)|\mathbf{l}_1|  \\&- 2 \epsilon \Bigg|\left(S^{++}\left(\mathbf{a}_2^\top\right) + S^{+-}\left(\mathbf{a}_2^\top\right) + S^{--}\left(\mathbf{a}_2^\top\right) \right)\text{diag}\left(\mathbbm{1}\left\{\mathbf{u}_1 \ge \mathbf{0} \right\}\mathbf{A}_1\right)\Bigg| \mathbf{1}_n \\
                          & = 2 \epsilon  S^{++}\left(|\mathbf{a}_2^\top|\right)|\mathbf{A}_1|\mathbf{1}_n + S^{+-}\left(|\mathbf{a}_2^\top|\right)\left(\mathbf{A}_1\mathbf{x} + \mathbf{b}_1\right)                 \\&+ \epsilon S^{+-}\left(|\mathbf{a}_2^\top|\right)\mathbf{A}_1 \mathbf{1}_n  \\
                          & - 2 \epsilon \Bigg|\left(S^{++}\left(\mathbf{a}_2^\top\right) + S^{+-}\left(\mathbf{a}_2^\top\right) \right)\mathbf{A}_1\Bigg| \mathbf{1}_n                                               \\
                          & = \underbrace{2 \epsilon  S^{++}\left(|\mathbf{a}_2^\top|\right)|\mathbf{A}_1|\mathbf{1}_n}_{\circledOne} + \underbrace{S^{+-}\left(|\mathbf{a}_2^\top|\right)\mathbf{u}_1}_{\circledTwo} \\&- \underbrace{2 \epsilon \Bigg|\left(S^{++}\left(\mathbf{a}_2^\top\right) + S^{+-}\left(\mathbf{a}_2^\top\right) \right)\mathbf{A}_1\Bigg| \mathbf{1}_n}_{\circledThree}.
    \end{align*}

    Note that in the first equality, we used the property of the coordinate splitting functions defined in Equation (\ref{supp_index_split}). In the second equality, we used the fact that $\mathbf{l}_1,\mathbf{u}_1 = \mathbf{A}_1\mathbf{x} + \mathbf{b}_1 \mp \epsilon |\mathbf{A}_1|\mathbf{1}_n$. The last term in the penultimate equality follows since $S^{++}$ and $S^{+-}$ corresponds to the indices that are selected by $\text{diag}\left(\mathbbm{1}\left\{\mathbf{u}_1 \ge \mathbf{0}\right\}\right)$.

    Now by taking the expectation over $\mathbf{a}_2$, we have for $\circledOne$:
    \begin{align*}
        2\epsilon \mathbb{E}\left[S^{++}\left(|\mathbf{a}_2^\top| |\mathbf{A}_1|\right)\right]\mathbf{1}_n & = 2 \epsilon \sum_{i=1}^k \mathbb{E}\left[|\mathbf{a}_2^i|\right] |\mathbf{A}_1(i,:)| \mathbbm{1}\left\{\mathbf{u}_1^i\ge\mathbf{0}, \mathbf{l}_1^i \ge 0\right\}\mathbf{1}_n \\
                                                                                                           & = 2 \epsilon \sigma_{\mathbf{a}_2} \sqrt{\frac{2}{\pi}} \sum_{i=1}^k |\mathbf{A}_1(i,:)| \mathbbm{1}\left\{\mathbf{l}_1^i \ge 0\right\}\mathbf{1}_n                           \\
                                                                                                           & = 2 \epsilon \sigma_{\mathbf{a}_2} \sqrt{\frac{2}{\pi}} \sum_{j=1}^n \sum_{i=1}^k |\mathbf{A}_1(i,j)|\mathbbm{1}\left\{\mathbf{l}_1^i \ge 0\right\}
    \end{align*}
    The second equality follows from the mean of the folded Gaussian and the fact that $\mathbf{u}_1 \ge \mathbf{l}_1$.

    For \circledTwo, we have:
    \begin{align*}
        \mathbb{E}\left[S^{+-}\left(|\mathbf{a}_2^\top|\right) \mathbf{u}_1\right] & = \sigma_{\mathbf{a}_2} \sqrt{\frac{2}{\pi}} \sum_{i=1}^k  \mathbf{u}_1^i \mathbbm{1}\left\{\mathbf{u}_1^i\ge0, \mathbf{l}_1^i \le 0\right\}
    \end{align*}
    Lastly, for \circledThree, we have:

    \begin{align*}
         & 2 \epsilon \mathbb{E} \left[ \Bigg|\left(S^{++}\left(\mathbf{a}_2^\top\right) + S^{+-}\left(\mathbf{a}_2^\top\right) \right)\mathbf{A}_1\Bigg| \right]\mathbf{1}_n = 2 \epsilon \mathbb{E} \Bigg| \left[\sum_{i=1}^k \mathbf{A}_1(i,:) \mathbf{a}_2^i \left(\mathbbm{1}\left\{\mathbf{u}_1^i \ge 0\right\} \right)\right]\Bigg| \mathbf{1}_n \\
    \end{align*}

    Using Holder's inequality, i.e. $\mathbb{E}[|x|] \leq \sqrt{\mathbb{E}[x^2]}$, per coordinate of the vector $ \left[\sum_{i=1}^k \mathbf{A}_1(i,:) \mathbf{a}_2^i \left(\mathbbm{1}\left\{\mathbf{u}_1^i \ge 0\right\} \right)\right]$ and by binomial expansion, we have at the $j^{\text{th}}$ coordinate

    \begin{align*}
         & 2 \epsilon \mathbb{E} \Bigg| \left[\sum_{i=1}^k \mathbf{A}_1(i,:) \mathbf{a}_2^i \left(\mathbbm{1}\left\{\mathbf{u}_1^i \ge 0\right\} \right)\right]\Bigg| \leq 2 \epsilon  \sqrt{\mathbb{E}\left[\sum_{i=1}^k\mathbf{A}_1(i,j)\mathbf{a}_2^i \mathbbm{1}\left\{\mathbf{u}_1^i \ge 0\right\}\right]^2}               \\
         & = 2 \epsilon \left(\sum_{i=1}^k \left(\mathbf{A}_1(i,j)\right)^2 \mathbb{E}\left[\Bigg(\mathbf{a}_2^i\right)^2\right] \mathbbm{1}\left\{\mathbf{u}_1^i \ge 0\right\}                                                                                                                                                 \\&\quad\quad\quad + 2\sum_{i=1} \sum_{z<i}\mathbf{A}_1(i,j) \mathbf{A}_1(z,j) \mathbb{E}\left[\mathbf{a}_2^i \mathbf{a}_2^z\right] \mathbbm{1}\left\{\mathbf{u}_1^i \ge 0\right\} \mathbbm{1}\left\{\mathbf{u}_1^z \ge 0\right\}\Bigg)^{\frac{1}{2}}\\
         & = 2 \epsilon  \sqrt{\sum_{i=1}^k \left(\mathbf{A}_1(i,j)\right)^2 \mathbb{E}\left[\left(\mathbf{a}_2^i\right)^2\right] \mathbbm{1}\left\{\mathbf{u}_1^i \ge 0\right\}}  = 2 \epsilon \sigma_{\mathbf{a}_2} \sqrt{\sum_{i=1}^k \left(\mathbf{A}(i,j)\right)^2  \mathbbm{1}\left\{\mathbf{u}_i \ge \mathbf{0}\right\}}
    \end{align*}

    The second equality follows by the independence of $\mathbf{a}_2^i$ and that they have zero mean. Therefore it follows from \circledThree that:
    \begin{align*}
         & 2 \epsilon \mathbb{E} \left[ \Bigg|\left(S^{++}\left(\mathbf{A}_2^\top\right) + S^{+-}\left(\mathbf{A}_2^\top\right) \right)\mathbf{A}_1\Bigg| \right]\mathbf{1}_n \leq 2 \epsilon \sigma_{\mathbf{a}_2}  \sum_{j=1}^n \sqrt{\sum_{i=1}^k \left(\mathbf{A}_1(i,j)\right)^2  \mathbbm{1}\left\{\mathbf{u}_i \ge \mathbf{0}\right\}} \\
    \end{align*}

    Lastly, putting things together, i.e. $\mathbb{E}\left[\circledOne + \circledTwo - \circledThree\right]$ we have that

    \begin{equation}
        \begin{aligned}
            \label{eq:sup_main_ineq}
            \mathbb{E}\left[W_{IBP} - W_{ETB}\right]
             & \ge  2 \epsilon \sigma_{\mathbf{a}_2} \sqrt{\frac{2}{\pi}} \sum_{j=1}^n \sum_{i=1}^k |\mathbf{A}_1(i,j)|\mathbbm{1}\left\{\mathbf{l}_1^i \ge 0\right\} \\&+ \sigma_{\mathbf{a}_2} \sqrt{\frac{2}{\pi}} \sum_{i=1}^k \mathbf{u}_1^i \mathbbm{1}\left\{\mathbf{u}_1^i \ge 0, \mathbf{l}_1^i \leq 0\right\}
            \\&- 2 \epsilon \sigma_{\mathbf{a}_2} \sum_{j=1}^n \sqrt{\sum_{i=1}^k \mathbf{A}_1(i,j)^2 \mathbbm{1}\left\{\mathbf{u}_1^i \ge 0 \right\}}.
        \end{aligned}
    \end{equation}

    Note that to show that the previous inequality is non-negative, it is sufficient to show that the previous inequality is non-negative for the non-intersecting sets $\{i:\mathbf{l}^i_1 \ge \mathbf{0}\}$ and $\{i:\mathbf{u}_1^i \ge \mathbf{0},\mathbf{l}_1^i \leq \mathbf{0}\}$. Thus the right hand side can be written as the sum of two sets.

    \textbf{For the set} $\{i:\mathbf{l}^i_1 \ge \mathbf{0}\}$, the RHS of inequality (\ref{eq:sup_main_ineq}) reduces to :

    \begin{equation}
        \begin{aligned}
            \label{sup_eq:first_set_inq}
            2\epsilon \sigma_{\mathbf{a}_2} \sum_{j=1}^n\left(\sqrt{\frac{2}{\pi}} \|\mathbf{A}_1(:,j)\|_1 - \|\mathbf{A}_1(:,j)\|_2 \right).
        \end{aligned}
    \end{equation}

    \textbf{For the set} $\{i:\mathbf{u}_1^i \ge \mathbf{0},\mathbf{l}_1^i \leq \mathbf{0}\}$ and using the definition of $\mathbf{u}_1$, the RHS of inequality (\ref{eq:sup_main_ineq}) reduces to

    \begin{align}
        \label{sup_eq:second_set_inq}
         & \sigma_{\mathbf{a}_2} \sqrt{\frac{2}{\pi}} \sum_{i=1}^k \left(\sum_{j=1}^n \mathbf{A}_1(i,j) \mathbf{x}_j + \mathbf{b}_i + \epsilon \sum_{j=1}^n |\mathbf{A}_1(i,j)|\right)- 2 \epsilon \sigma_{\mathbf{a}_2} \sum_{j=1}^n\|\mathbf{A}_1(:,j)\|_2 \nonumber                          \\
         & = \sigma_{\mathbf{a}_2} \sqrt{\frac{2}{\pi}} \sum_{j=1}^n \left(\mathbf{x}_j \mathbf{1}_k^\top \mathbf{A}_1(:,j) + \frac{1}{n}\mathbf{1}_k^\top \mathbf{b} + \epsilon \|\mathbf{A}_1(:,j)\|_1\right)- 2 \epsilon \sigma_{\mathbf{a}_2} \sum_{j=1}^n\|\mathbf{A}_1(:,j)\|_2 \nonumber \\
         & = \sum_{j=1}^n \left(\sigma_{\mathbf{a}_2} \sqrt{\frac{2}{\pi}} \Big(\mathbf{x}_j \mathbf{1}_k^\top \mathbf{A}_1(:,j) + \frac{1}{n}\mathbf{1}_k^\top \mathbf{b} + \epsilon \|\mathbf{A}_1(:,j)\|_1\Big) - 2 \epsilon \sigma_{\mathbf{a}_2} \|\mathbf{A}_1(:,j)\|_2\right)
    \end{align}

    Note that given the Assumption \ref{key_assumption} where $\frac{1}{\sqrt{2\pi}} \mathbf{x}_j \mathbf{1}_k^\top \mathbf{A}_1(:,j) + \frac{1}{2n} \mathbf{1}_k^\top \mathbf{b} \ge 0 \ge \epsilon\left( \|\mathbf{A}_1(:,j)\|_2 - \frac{1}{\sqrt{2\pi}} \|\mathbf{A}_1(:,j)\|_1\right)$ $\forall j$, then if both Equations (\ref{sup_eq:first_set_inq}) and (\ref{sup_eq:second_set_inq}), this completes the proof.
}

\begin{lemma}\label{supp_lemma_1}
    For $\mathbf{x} \in \mathbb{R}^k \sim \mathcal{N}\left(\mathbf{0},\mathbf{I}\right)$, where $k \ge 5$ we have that $\mathbb{E}\left[\frac{3}{\sqrt{2\pi}} \|\mathbf{x}\|_1 - 2\|\mathbf{x}\|_2 \right] \ge 0$.
\end{lemma}
\proof{
    Note that by the mean of a folded Gaussian, we gave that $  \mathbb{E}\left[\|\mathbf{x}\|_1\right] = \sum_i^k \mathbb{E}\left[|\mathbf{x}_i|\right] = k \sqrt{\frac{2}{\pi}}$. Moreover, note that
    \begin{align*}
        \mathbb{E}\left[\|\mathbf{x}\|_2\right] = \mathbb{E}\left[\sqrt{\sum_i^k \mathbf{x}_i^2}\right] & =  \mathbb{E} \left[\sqrt{y}\right] = \frac{1}{2^{\frac{k}{2}-1}\Gamma(\frac{k}{2})} \int_0^\infty x^{k} \exp\left(-\frac{x^2}{2}\right) dx \\&= \frac{2^\frac{k-1}{2} \Gamma\left(\frac{k+1}{2}\right)}{2^{\frac{k}{2}-1} \Gamma(\frac{k}{2})}
        = \sqrt{2}\frac{\Gamma\left(\frac{k+1}{2}\right)}{ \Gamma(\frac{k}{2})} \sim \sqrt{k}.
    \end{align*}
    Note that $y$ is Chi-Square random variable and that $f_{\sqrt{y}}(x) = 2x f_y(x^2) = \frac{x^{k-1}}{2^{\frac{k}{2}-1} \Gamma(\frac{k}{2})} \exp\left(-\frac{x^2}{2}\right)$ where the third inequality follows by integrating by parts recursively. Lastly, the last approximation follows by stirling's approximation for large $k$.
}

\begin{proposition}
    \label{supp:proposition_random_mat}
    For a random matrix $\mathbf{A}_1 \in \mathbb{R}^{k \times n}$ with i.i.d elements such $\mathbf{A}_1(i,j) \sim \mathcal{N}(0,1)$, then
    \begin{align*}
         & \mathbb{E}_{\mathbf{A}_1}\left(\|\mathbf{A}_1(:,j)\|_2 - \frac{1}{\sqrt{2\pi}} \|\mathbf{A}_1(:,j)\|_1\right) = \sqrt{2} \frac{\Gamma\left(\frac{k+1}{2}\right)}{\Gamma\left(\frac{k}{2}\right)} - k \sqrt{\frac{2}{\pi}} \approx \sqrt{k}\left(1 - \sqrt{\frac{2}{\pi}}\sqrt{k}\right).
    \end{align*}
\end{proposition}
\proof{
    The proof follows immediately from Lemma \ref{supp_lemma_1}.
}

\end{document}